\author{
Spencer Frei \\
UC Berkeley \\
frei@berkeley.edu
\and 
Niladri S. Chatterji  \\
Stanford University \\
niladri@cs.stanford.edu
      \and
Peter L. Bartlett \\
UC Berkeley \\
peter@berkeley.edu
}
\date{\today}
\newcommand\numberthis{\addtocounter{equation}{1}\tag{\theequation}}
\title{\textbf{Random Feature Amplification: \\Feature Learning and Generalization in Neural Networks}}
\begin{document}
\maketitle

\begin{abstract}
   In this work, we provide a characterization of the feature-learning process in two-layer ReLU networks trained by gradient descent on the logistic loss following random initialization.  We consider data with binary labels that are generated by an XOR-like function of the input features. 
    We permit a constant fraction of the training labels to be corrupted by an adversary.  We show that, although linear classifiers are no better than random guessing for the distribution we consider, two-layer ReLU networks trained by gradient descent achieve generalization error close to the 
    label noise rate. 
    We develop a novel proof technique that shows that at initialization, the vast majority of neurons function as random features that are only weakly correlated with useful features, and the gradient descent dynamics `amplify’ these weak,  random features to strong, useful features. 
\end{abstract}

\section{Introduction}
A number of recent works have developed optimization and generalization guarantees for neural networks in the `neural tangent kernel regime', namely, where the behavior of the neural network can be well-approximated by the linearization of the network around its random initialization~\citep{jacot2018ntk,allenzhu2019convergence,zou2019gradient,du2019-1layer,arora2019exact,soltanolkotabi2017overparameterized}.  Although these works provide a deep understanding of the behavior of neural networks in the early stages of training---where the network parameters are close to their initial values---they fail to capture a number of meaningful characteristics of practical neural networks such as the ability to learn features that differ significantly from those found at random initialization~\citep{fort2020ntk,long2021afterkernel}.  This points to the need for analyses of neural network training that can characterize how gradient descent is able to learn meaningful features.

A remarkable feature of neural networks is that despite their capacity to overfit, when trained by gradient descent they are capable of feature-learning even when there is significant label noise in the training data.   Label noise is a common feature in modern machine learning datasets like ImageNet~\citep{shankar2020evaluatingimagenet}, and moreover, some of the most interesting behaviors of neural networks have been observed when they are trained on datasets with artificially introduced random label noise~\citep{zhang2017rethinkinggeneralization}. 
This points to the importance of theoretically understanding the effect of noisy labels on the neural network training process.  A handful of recent works have sought to understand the training dynamics of neural networks in the presence of noisy labels, but were either restricted to neural networks in the neural tangent kernel (NTK) regime, where feature learning is impossible~\citep{hu2020simple,ji2021earlystopped}; failed to provide generalization guarantees for the resulting network~\citep{li2019labelnoise}; or only applied in settings where linear classifiers perform well~\citep{frei2021provable}.  

\begin{figure}[h]
    \centering
    \includegraphics[width=0.45\textwidth]{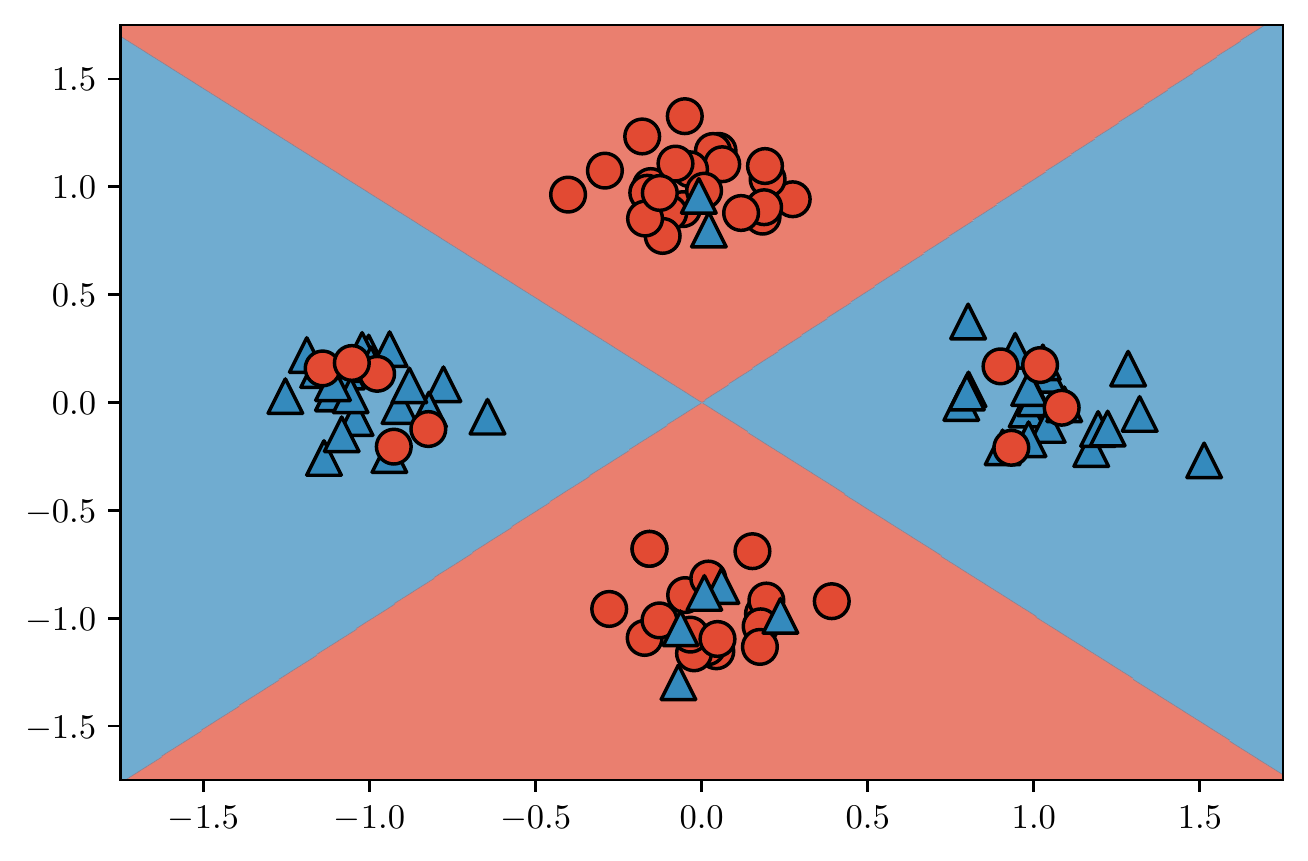}
    \caption{We consider a noisy 2-XOR cluster distribution where opposing cluster means share the same initial `clean' label but a constant fraction of the labels are corrupted by an adversary. The figure is for the special case of Gaussian cluster distributions in $d=2$ dimensions with in-cluster variance $\sigma^2=1/50$ when labels are flipped with probability 15\%.   We plot the decision boundary resulting from training a two-layer ReLU network given $n=5000$ samples (we plot only a subset of the training samples to more clearly illustrate the labels of the samples).  The network was trained for $T=3000$ iterations, with network width $m=500$, step-size $\alpha=0.05$, and initialization variance $\sinit^2 = 1/(32 m)$.}
    \label{fig:noisy.data}
\end{figure}

In this work, we characterize the feature learning process of, and provide generalization guarantees for, two-layer ReLU networks trained by gradient descent on a data distribution where no linear classifier (that use input features) can perform better than random guessing.  In particular, 
we consider two-layer ReLU networks where the first layer is trained while the second layer is fixed at its initial values, and we assume 
the data comes from a uniform mixture of four clusters of data, with means at $+\mu_1, -\mu_1, +\mu_2, -\mu_2$, where $\mu_1,\mu_2\in \R^d$ are orthogonal.  Clean labels are initially generated by an XOR function of the clusters: data from the $+\mu_1$ and $-\mu_1$ clusters have the clean label $+1$, and data from the $+\mu_2$ and $-\mu_2$ clusters have the label $-1$.  We then allow for a constant fraction of these labels to be corrupted arbitrarily.  
Our results show that, provided gradient descent is initialized randomly with a sufficiently small initialization variance and provided the learning rate is sufficiently large, then with high probability gradient descent produces a network that correctly classifies every `clean' test example and incorrectly classifies every `noisy' test example.  
We point the reader to Figure~\ref{fig:noisy.data} to see an example of the data distribution and the decision boundary learned in this setting.  Our results hold for networks of essentially constant width and for arbitrarily small initialization variance. This is in contrast to the neural tangent kernel approaches where the initialization scale is relatively much larger that prevents features to change substantially during training.

Our proof follows by characterizing the types of features that individual neurons learn throughout the training process.  We show that at random initialization, provided the width of the network is a sufficiently large constant, most neurons are `weak' random features: they have a normalized correlation of order $O(1/\sqrt d)$, where $d$ is the input dimension, with at least one of the cluster means $\{\pm \mu_1, \pm \mu_2\}$.  After initialization, provided the learning rate is sufficiently large, a single step of gradient descent amplifies these neurons from `weak' random features to `strong', learned features: the normalized correlations with the cluster means improve from order $O(1/\sqrt d)$ to order $O(1)$.  In the later part of the training process, we show that the gradient descent dynamics ensure that if a neuron is highly correlated with a given cluster center $\mu_s$ after the first step, then (1) its norm increases throughout training, so that the network relies more upon this neuron to determine the network output, and (2) the neuron becomes orthogonal to the opposing cluster center $\mu_{s'}$, $s'\neq s$, so that the neuron is useful only for samples from the cluster center $\mu_{s}$.  We show that having properties (1) and (2) is sufficient for producing a network that classifies all of the clean samples correctly and noisy samples incorrectly.   A key difficulty in showing each of these facts is the presence of noisy training labels, which could in principle prevent the network from learning useful features; a careful analysis shows that this barrier is surmountable provided the fraction of noisy labels is smaller than an absolute constant.

\subsection{Related work}
As mentioned in the previous subsection, a number of works have highlighted the need to develop analyses of neural network training that go `beyond' the NTK, or equivalently, neural networks that lie in the `feature learning regime'. One collection of works has focused on developing separations between what hypothesis classes can be learned efficiently using neural networks in the feature learning regime versus what can be learned using approaches based on kernels or random features~\citep{yehudai2020power,allenzhu.kernel,ghorbani2019limitationslazytraining,wei2019regularization,daniely2020learningparitiesneuralnets,allenzhu2021backward,malach2021quantifying,abbe2021powerofdifferentiable}.  One example of such a hypothesis class includes single neurons $x\mapsto \phi(\sip{w}{x})$, which can be efficiently learned using gradient descent on neural networks beyond the kernel regime~\citep{frei2020singleneuron,yehudai2020singleneuron} but cannot be efficiently learned using random features or kernel-based methods~\citep{yehudai2020power,kamath2020approximategoodenough}.  For a more detailed comparison of recent work on separations between what is learnable using kernel methods versus what is learnable using neural networks in the feature learning regime, we refer the reader to Table 2 and Appendix A of~\citet{malach2021quantifying}.  We note that two concurrent works have shown that a single step of gradient descent suffices for feature-learning behavior in neural networks~\citep{ba2022highdimensionalfeaturelearning,damian2022representationlearning}.  We also show that a single step of gradient descent suffices for learning data-dependent features, but our analysis also requires training for more than one step so that the learned features become more `refined' (see Conditions~\ref{condition:alignment.condition} and \ref{condition:almost.orthogonal} as well as Lemma~\ref{lemma:subnetwork.margin.time.t1} below).

Another line of work utilizes the mean field approximation to connect the training dynamics of infinitely wide neural networks to that of the solution to a partial differential equation~\citep{mei2018meanfield,chizat2018global,wei2019regularization,chen2020generalized,fang2021modelingfeaturesdeepmeanfield}.  This approach allows for the network weights to traverse far from the initialization and learn features.  These works provide a useful characterization of the limiting behavior of neural networks as they become infinitely wide.  By contrast, in this work we provide a guarantee for neural network optimization and generalization for networks of constant width (for a constant level of failure probability).

A handful of other works have explored the behavior of neural networks trained by gradient descent for variants of the XOR distribution we consider in this work.~\citet{wei2019regularization} used the mean field approximation to show that infinite-width two-layer networks trained by gradient flow will generalize well. \citet{bailee20beyondlinearization} considered two-layer neural networks with smooth activations trained with additional `random sign' and $\snorm{W}_{2,4}^8$ penalty regularization.  They showed that when training with a large random initialization and a very large network, the second-order term of the Taylor expansion of the network around its initialization dominates the training dynamics and has a good optimization landscape provided the weights are close enough to initialization.  They used this to derive a generalization guarantee for the resulting network.  Although the work~\cite{bailee20beyondlinearization} is a strict improvement over standard NTK-based approaches, their analysis is more similar to the kernel-based analysis than the feature-learning approach we take here.  Finally,~\citet{daniely2020learningparitiesneuralnets} provided a characterization of learning a noiseless parity over the binary cube when performing gradient descent on the population risk (i.e., assuming infinite samples).  Their analysis relies upon a neuron-by-neuron characterization of the learning process, similar to ours, but it is unclear how their analysis would proceed without access to infinite samples or if there are noisy labels.  Indeed, much of the difficulty in characterizing feature-learning in neural networks comes from the possibility that neural networks could simply memorize the sampled training data rather than learn useful representations that enable generalization to unseen test data.  In contrast to all of the above works, our work provides a novel characterization of how feature-learning occurs in finite-width neural networks that are trained in the finite-sample setting and when a substantial portion of the training labels are adversarially corrupted.

Finally, since our analysis shows that early-stopped gradient descent with a small initialization variance produces neural networks with rather simple decision boundaries which essentially ignore the noisy labels (see Fig.~\ref{fig:noisy.data}), our work is related to a series of works on the simplicity bias of gradient descent~\citep{phuong2021inductivereluorthogonal,lyu2021gdmarginsimplicity,boursier2022gfdynamicsreluorthogonal,frei2023implicit}.  The aforementioned works all rely upon data that is either nearly-orthogonal or exactly orthogonal, while we make no such assumption.  On the other hand, these other works characterize the behavior of gradient descent throughout the entire training trajectory, while we require early-stopping.  
\section{Preliminaries}
We begin with describing our notational conventions. We denote $\norm x$ as the Euclidean norm of a vector $x$.  We will use uppercase letters to refer to matrices, with $\snorm W_F$ denoting the Frobenius norm of a matrix, and $\pnorm W2$ denoting the spectral norm. Given a matrix $W \in \R^{m\times d}$ we let $w_1,\ldots,w_m$ denote the rows of this matrix.
  Given any positive integer $k$, let $[k] = \{1, 2, \ldots, k\}$.

We next describe the distributional setting.  We consider a joint distribution $\sfP$ over $(x, y)\in \R^d \times \{\pm 1\}$ constructed as follows.
\begin{enumerate}
    \item First, define a cluster distribution $\pclust$ over $\R^d$, which we assume to be log-concave\footnote{That is, $z\sim \pclust$ has a probability density function $p_z$ satisfying $p_z(x) = \exp(-U(x))$ for some convex function $U: \R^d \to \R$.} and satisfies $\E_{z\sim \pclust}[z]=0$ and $\E_{z\sim \pclust}[ z z^\top]=\sigma^2 I_d$ where $\sigma>0$ is a fixed parameter.  
    \item Let $\mu_1, \mu_2\in \R^d$ be unit norm orthogonal vectors, so $\sip{\mu_1}{\mu_2}=0$ and $\norm{\mu_i}=1$ for $i=1,2$. The positive clusters are centered at $\mu_1$ and $-\mu_1$, while the negative clusters are centered at $\mu_2$ and $-\mu_2$.
    \item 
    The distribution of `clean'
    samples $\widetilde \sfP$ is an XOR-like mixture distribution consisting of four independent cluster distributions $\{ \pclust^{(i)}\}_{i=1}^4$ centered at $\mu_1,-\mu_1,\mu_2,-\mu_2$ with labels $+1, +1, -1, -1$ respectively. That is, for example, for $(x,\tilde y) \sim \pclust^{(1)}$, $x = \mu_1 +z$ where $z \sim \pclust$ and $\tilde y = 1$. The 
    distribution of clean samples is the uniform mixture 
    $ \widetilde \sfP := \f 1 4 \left[ \pclust^{(1)} +  \pclust^{(2)} +  \pclust^{(3)} +  \pclust^{(4)}\right].$
    \item Finally, the data distribution $\sfP$ is constructed by introducing label noise to $\widetilde \sfP$.  The distribution $\sfP$ has the same marginal distribution over $x$ as $\widetilde \sfP$, but for a given $(x,y)\sim \sfP$, the label $y$ is equal to $\tilde y$ with probability $1-\eta(x)$ and is equal to $-\tilde y$ with probability $\eta(x)$ for some $\eta(x)\in [0,1]$.  We call
    $\eta := \E_{x\sim \sfP} \left[\eta(x)\right]$ the \textit{noise rate}.
\end{enumerate}

We assume the training data $S$ is generated as i.i.d. samples from $\pnoise$,
\[ S := \{(x_i, y_i)\}_{i=1}^{n} \iid \sfP^n.\]
The samples $\{(x_i, y_i)\}_{i=1}^n$ can be partitioned into \textit{clean} and \textit{noisy} samples, where we use the notation $\calct t, \calnt t \subset [n]$ to denote the indices corresponding to the clean and noisy samples.  In particular, using the notation $\tildeyit it$ to denote the clean label for the $i$-th sample, we have
\[  y_i = \begin{cases} \tilde y_i, & i\in \calct t,\\
-\tilde y_i, & i\in \calnt t.\end{cases}\]
We will consider the regime where the noise rate $\eta \approx |\calnt t|/n$ is smaller than
a constant.  In Figure~\ref{fig:noisy.data}, we illustrate what samples from this distribution look like.

We analyze the classification error attained by neural networks trained by gradient descent with the logistic loss given the dataset $S$.  In particular, we consider the class of one-hidden-layer ReLU networks consisting of $m$ neurons with first layer weights $W\in \R^{m\times d}$,
\begin{equation} \label{eq:twolayerrelu}
x\mapsto  f(x; W) := \summ j m a_j \phi(\sip{w_j}{x}), \quad \text{where}\quad \phi(t) := \max\{0,t\}.
\end{equation}
We will use the convention that $\phi$ is applied entry-wise, so that $\phi(Wx)$ has $j$-th component $\phi(\sip{w_j}{x})$. 
For simplicity, we assume that $m$ is an even number and that half of the second layer weights $a_j$ are initialized at the value of $+1/\sqrt m$, and the other half are initialized at the value $-1/\sqrt m$. (Our results hold for odd $m$ by setting $a_m=0$.)  We assume the second layer weights are fixed at their initialized values throughout training. 
This assumption allows for a more simplified analysis as it allows for a static partition of the neurons into `positive' neurons (those for which $a_j>0$) and `negative' neurons ($a_j<0$) throughout training.  We believe it is possible to extend our analysis to the setting where both layers are trained but we do not pursue this question in this work.  

Let $\ell(z) := \log(1+\exp(-z))$ be the logistic loss.   We consider the gradient descent algorithm on the empirical risk $\hat L(W)$ corresponding to weights $W$ the $n$ samples $\{(x_i, y_i)\}_{i=1}^n$, where
\[ \hlt t(W) := \f 1n \summ i n \ell\big (\yit  it f(x_i;W)\big).\]
The population risk under the logistic loss is defined as
\[ \quad L(W) := \E_{(x,y)\sim \sfP} \left[\ell\big(y f(x; W)\big)\right].\] 
We consider ReLU networks trained by gradient descent on the first layer weights with fixed learning rate $\alpha>0$ and with random initialization $[\Wt 0]_{i,j} \iid \mathsf{N}(0,\sinit^2)$.  In particular,
\[ 
\Wt {t+1} = \Wt t - \alpha \nabla \hlt t(\Wt t) = \Wt t - \f{\alpha}{n} \summ i n \ell'\big(y_i f(x_i;\Wt t)\big) y_i \nabla f(x_i;\Wt t).
\]
Note that since the ReLU activation $\phi(q)=\max(0,q)$ is not differentiable at 0, we use any subgradient value $\phi'(0)\in [0,1]$  when performing gradient descent.  (Our results do not depend on the value chosen for the subgradient.)

We let $C>1$ denote a positive absolute constant that is large enough.
Given a failure probability $\delta \in (0,1/2)$ we make the following assumptions going forward:
\begin{enumerate}[label=(A\arabic*)]
    \item \label{a:dimension}The dimension $d \ge C \max\left\{\log^2(n/\delta), \log(m/\delta)\right\}$;
    \item \label{a:sigma}The in-cluster variance $\sigma^2 \le 1/(C^2d)$;
    \item \label{a:samples}The sample size $n \ge C\log(m/\delta)$;
    \item \label{a:noisy.fraction}The noise rate $\eta \leq 1/C$;
    \item \label{a:width}The number of hidden nodes satisfies $m \ge C\log(1/\delta)$;
    \item \label{a:sinit}The variance at initialization satisfies $0 < \sinit^2 \le \frac{1}{C^4md}$;
    \item \label{a:stepsize}The step-size $\alpha$ satisfies $1/(2\sqrt C) \leq \alpha \leq 1/\sqrt C$.
\end{enumerate}
The first four assumptions above concern the distribution and the relationship between the number of samples, dimension, and number of neurons in the network.  These assumptions are relatively mild as they only require that the dimension and number of samples are logarithmically large.  
These assumptions ensure that the signal-to-noise ratio in the model is quite high, and that in the setting with no label noise $\eta=0$, the optimal test error achievable is $o_n(1)$ (see Appendix~\ref{app:optimal.error} for more details).  
The final three assumptions concern the hyperparameters for the model and the optimization algorithm.  Assumption~\ref{a:width} ensures that the network is wide enough to ensure there are enough random features at initialization for gradient descent to ``amplify''.     
It is noteworthy that assumption~\ref{a:sinit} permits arbitrarily small (but nonzero) initialization variance.  
The assumption~\ref{a:stepsize} ensures that the step-size is large enough so that significant features can be learned after a single step of gradient descent but small enough so that optimization is stable. 

\section{Main results}

Our main contribution is summarized in the following theorem.  
\begin{restatable}{thmv2}{mainthm}\label{thm:final.generalization}
Let $\delta \in(0,1/2)$. 
For all $C>1$ sufficiently large, under the assumptions~\ref{a:dimension} through~\ref{a:stepsize}, by running gradient descent with step-size $\alpha$ for $T = 1+1/(4\alpha)$ iterations, with probability at least $1-4\delta$ 
over the random initialization and the draws of the samples we have,
\begin{enumerate}
    \item For the training points:
    \begin{align*}
    \text{for all $i\in \calct t$,}\quad y_i &= \sgn \l( f(x_i; \Wt {T})\r),\\
     \text{while for all $i\in \calnt t$,}\quad y_i &\neq \sgn\l( f(x_i; \Wt {T})\r).
    \end{align*}
    \item Further, the test error satisfies
    \begin{align*}
        \P_{(x,y)\sim \sfP} \big(y \neq \sgn( f(x; \Wt {T})) \big) &\leq \eta + C \sqrt{\f{\log(1/\delta)}{n}}.
    \end{align*}
\end{enumerate}
\end{restatable}
Theorem~\ref{thm:final.generalization} shows that at time $T$, 
gradient descent learns a network that accurately classifies every clean sample, and incorrectly classifies every noisy sample, and achieves population risk close to the noise rate $\eta$.  In Figure~\ref{fig:noisy.data}, we plot the decision boundary for a neural network trained by gradient descent when 15\% of the training labels are flipped and we observe that indeed every noisy sample is incorrectly classified and every clean sample is correctly classified.  

It is worth noting that the decision boundary displayed in Figure~\ref{fig:noisy.data} is rather simple.  Our proof below will show that this simplicity is due to the fact that nearly every neuron in the neural network will become highly correlated to one of the four cluster means $\{\pm \mu_1, \pm \mu_2\}$ so that the neural network essentially acts as the low-complexity classifier $x\mapsto \sgn(|\sip{\mu_1}{x}|-|\sip{\mu_2}{x}|)$.  The main technical contribution of our work is the characterization of this feature-learning process and an examination of how it proceeds in the presence of noisy labels.

Let us remark that previous works on the generalization of neural networks in the feature-learning regime for variants of the XOR problem we study (without label noise) have sample complexities of order $O(\sqrt{d/n})$, which is an improvement over kernel-based methods which have sample complexity $\Omega(\sqrt{d^2/n})$ \citep{wei2019regularization,bailee20beyondlinearization}.  By contrast, Theorem~\ref{thm:final.generalization} provides a dimension-independent rate of $O(\sqrt{1/n})$.   This difference is due to the fact that 
they consider an XOR problem with a lower signal-to-noise ratio than the one we consider.  In particular, they assume the features are uniform on the hypercube $\{\pm 1 \}^d$ with labels given by $y = \sgn(x_i x_j)$ for distinct coordinates $i\neq j$.  Since the variance in every direction is the same, the signal-to-noise ratio is thus of order $\Theta(1/d)$.  In our setting, the variance in the signal directions is larger: the variance in the direction of $\mu_1$ and $\mu_2$ is equal to $1+\sigma^2$ while the variance in the direction of any vector orthogonal to $\mu_1$ and $\mu_2$ is $\sigma^2$.  Thus, the signal-to-noise ratio in our setting is of order $\Theta\l (\f{1 + \sigma^2}{d\sigma^2}\r) = \Omega(1)$ by Assumption~\ref{a:sigma}.

We note that our analysis does not rely upon the neural tangent kernel approximation.  One way to see this is to observe that the assumption on the width of the network given in Assumption~\ref{a:width} only requires the width to be larger than a fixed constant for a constant level of failure probability.   Moreover, we show explicitly in the following proposition that for each sample, the feature maps given by the hidden layer activations change significantly from their values at random initialization, an essential characteristic of neural networks in the feature-learning regime~\citep{yang2021feature}.   

\begin{restatable}{prop}{featuremapschange}
\label{prop:feature.maps.different}Under the settings of Theorem~\ref{thm:final.generalization}, with probability at least $1-4\delta$ over the random initialization and draws of the samples, the feature maps of the neural network at time $T=1+1/(4\alpha)$ satisfy,
for all $i\in [n]$,
\[  \f{\snorm{\phi(\Wt T x_i) - \phi(\Wt 0 x_i)} }{\snorm{\phi(\Wt 0 x_i)} } \geq \f{1}{C \sinit \sqrt{md}} \geq \f 1 C.\]
In particular, as $\sinit\sqrt{md} \to 0$, the relative change in each sample's feature map is unbounded. 
\end{restatable}
The proof of Proposition~\ref{prop:feature.maps.different} is given in Appendix~\ref{sec:featuremapschange}.

In the next section, we provide the proof of Theorem~\ref{thm:final.generalization}.  The proof follows by concretely characterizing the type of features that different neurons learn throughout the training process.

\section{Proofs}\label{sec:proofs}
In this section, we provide an overview of the proof of Theorem~\ref{thm:final.generalization}. The detailed proofs are collected below in Appendix~\ref{a:main_paper_omitted_proofs}.

We begin by introducing some additional notation that will be needed throughout the proofs. 
As stated above, the set of samples $\{(x_i, y_i)\}_{i=1}^n$ can be partitioned into clean samples and noisy samples, which are identified by the index sets $\calct t, \calnt t \subset [n]$, respectively, and $\calct t \cup \calnt t = [n].$ Each sample comes from one of four clusters, with possible means $\{\pm \mu_1,\pm \mu_2\}$, and we will identify these samples with $\imt {+\mu_1}{t}$, $\imt {-\mu_1}{t}$, $\imt {+\mu_2}{t}$, $\imt {-\mu_2}{t} \subset [n]$.  We further decompose each of these cluster identification sets into the clean and noisy parts, that is, $\imt {+\mu_1}{t}=I_{+\mu_1}^{\calct t} \cup I_{+\mu_1}^{\calnt t}$, and similarly for $I_{-\mu_1}$, $I_{+\mu_2}$, and $I_{-\mu_2}$.  This notation allows for us to write $i\in I_{-\mu_1}$ when we mean $(x_i, y_i) = (-\mu_1 + z, -1)$, where $z\sim \pclust$.   We use the short-hand notation $I_{\pm \mu_1}$ to denote $\imt {+\mu_1}{t}\cup \imt {-\mu_1}{t}$ and likewise for $I_{\pm \mu_2}$.

We note that there exists a natural neural network consisting of four ReLU neurons that can classify the (clean) data with high accuracy:
\begin{equation} \label{eq:four.neuron.network}
f^\star(x;W) := |\sip{\mu_1}{x}| - |\sip{\mu_2}{x}| = \phi(\sip{\mu_1}{x}) + \phi(\sip{-\mu_1}{x}) - \phi(\sip{\mu_2}{x}) - \phi(\sip{-\mu_2}x).
\end{equation}
This ideal low-complexity classifier is suggestive of the following possibility: for positive neurons, corresponding to second-layer weights satisfying $a_j>0$, the neurons become adapted to either the $+\mu_1$ cluster or the $-\mu_1$ cluster, depending upon the sign of $\sip{\wt 0_j}{\mu_1}$ at initialization.  For negative neurons, corresponding to neurons with $a_j<0$, the neurons become adapted to either the $+\mu_2$ cluster or the $-\mu_2$ cluster depending on the sign of $\sip{\wt 0_j}{\mu_2}$ at initialization.  This is at a high-level the argument that we show below.

In the remainder of this section assume that Assumptions~\ref{a:dimension} through~\ref{a:stepsize} are in force. 
\subsection{Random Initialization and Sample Properties}
We begin with an analysis of the properties of the random initialization. In the lemma below, we derive concentration results on the norm of the random weights, as well as a count for the number of neurons that are correlated with a fixed vector at a given threshold level.  The correlation part of the lemma will be the basis of a `random feature amplification' phenomenon, whereby the relatively small (random) correlations of the neurons with different cluster means at initialization will be amplified into strong correlations by gradient descent.  

\begin{restatable}{lem}{randominit}
\label{lemma:random.init}
Let $\delta \in (0,1/2)$
and let $C_0>1$ be any absolute constant.  Let $\mu \in \R^d$ satisfy $\norm{\mu}=1$.  With probability at least $1-\delta$, we have for all $j \in [m]$,
\[ \f 12 \sinit \sqrt d \leq \snorm{\wt 0_j} \leq \f 32 \sinit \sqrt d,\]
and
 \[\summ j m \ind\l ( |\sip{\wt 0_j}{\mu}| \geq \f{\sinit}{2C_0} \r) \geq m\cdot \l( 1 - \f{1}{2 C_0 } - \sqrt{\f{2\log(4/\delta)}m} \r).\]
\end{restatable}

Recall from~\eqref{eq:four.neuron.network} that there exists a neural network with four ReLU neurons that achieves high accuracy on the clean distribution $\pclean$, with the neuron weights corresponding to the four cluster means $\{\pm \mu_1, \pm \mu_2\}$.  As we noted previously, a potential mechanism for neural network learning would be that most of the positive neurons (with second layer weights $a_j>0)$ become highly correlated with one of the $\pm \mu_1$ clusters while most of the negative neurons become highly correlated with one of the $\pm \mu_2$ clusters.  If $j$-th neuron's weight $w_j$ is highly correlated with a cluster mean $\mu \in \{ \pm \mu_1, \pm \mu_2\}$, then for all samples $x$ coming from the cluster $\mu$, the sign of the activation for a neuron on the sample $\sgn(\sip{w_j}{x})$ would be the same as the activation if the weight were exactly the cluster mean, $\sgn(\sip{\mu}{x})$, so that the $j$-th neuron behaves similarly to the cluster mean $\mu$.     If this occurs we say that the $j$-th neuron \textit{captures} the cluster with mean $\mu$.

We show below that this `capturing' phenomenon can be shown through a two-step process: first, at initialization, most of the positive neurons will have a normalized correlation with $\mu_1$ of order $\Theta(1/\sqrt d)$, and similarly most of the negative neurons will have a normalized correlation with $\mu_2$ of order $\Theta(1/\sqrt d)$.  This is Lemma~\ref{lemma:candidate.subnetwork} below.  Next, we show that by taking a single gradient step with a sufficiently large step-size, the normalized correlations for these neurons will improve from order $\Theta(1/\sqrt d)$ to order $\Theta(1)$.  This result, shown later in Lemma~\ref{lemma:nac.holds.time.1}, is what we refer to as the `random feature amplification' phenomenon, whereby the random features at initialization are amplified into useful features by gradient descent.  
Towards this end, we characterize the correlations of the neurons with the cluster means at initialization in the following lemma.  

\begin{restatable}{lem}{candidatesubnetwork}\label{lemma:candidate.subnetwork}
Let $\delta \in (0,1/2)$.  For any absolute constant $C_0>1$,
if $C$ is sufficiently large,
with probability at least $1-\delta$ over the random initialization, there exist sets of neurons $J_{+\mu_1}, J_{-\mu_1}, J_{+\mu_2}, J_{-\mu_2}\subset [m]$ satisfying the following: 
\begin{align*}
\text{for $\mu \in \{\pm \mu_1\}$,}\quad |J_{\mu}| &:= \l |\Bigg \{ j : a_j>0,\ \ip{\f{ \wt 0_j }{\snorm{\wt 0_j}} }{\mu} \geq \f {1}{3C_0 \sqrt d} \Bigg \} \r| \geq \f m 4 \l(1 - \f 1 {C_0}\r)^2,\\
\text{for $\mu \in \{\pm \mu_2\}$,}\quad |J_{\mu}| &:= \l |\Bigg \{ j : a_j<0,\ \ip{\f{ \wt 0_j }{\snorm{\wt 0_j}} }{\mu} \geq \f {1}{3C_0 \sqrt d} \Bigg \} \r| \geq \f m 4 \l(1 - \f 1 {C_0}\r)^2.
    \end{align*}
In particular, $J:= J_{\pm\mu_1}\cup J_{\pm \mu_2}$ satisfies $|J|\geq m(1-1/C_0)^2$. 
\end{restatable}

Lemma~\ref{lemma:candidate.subnetwork} identifies a set of candidate neurons that are partially correlated with the cluster \textit{means} $\{\pm \mu_1, \pm \mu_2\}$.  We would like to translate this result into a statement about the data, and to do so, we first need to provide some basic facts about samples from the distribution.  
The reader may find it helpful to refer back to the beginning of Section~\ref{sec:proofs} where we introduce the $I_{\pm \mu_i}$ notation. 

\begin{restatable}{lem}{datainitialization}\label{lemma:data.initialization}
There is a universal constant $C_1\geq 2$ such that the following holds.  For any $\delta \in (0,1/2)$, for all $C>1$ large enough, with probability at least $1-\delta$ over $S\sim \sfP^{n}$, the following holds. 
\begin{enumerate}
    \item[(a)]  For each $\mu \in \{ \pm \mu_1, \pm \mu_2\}$ and $\mu^\perp$ orthogonal to $\mu$,
    \[\text{for all $i\in I_{\mu}$, }\,\,\, \sip{x_i}{\mu} \geq 1 - C_1 \sigma \sqrt d \geq 1-1/C_1,\quad \text{and}\quad   |\sip{x_i}{\mu^\perp}| \leq C_1 \sigma \sqrt d  \leq 1/C_1.\]
    \item[(b)] For all $\mu \in \{ \pm \mu_1, \pm \mu_2 \}$, for any $i\in \imt {\mu}{t}$,  $\snorm{x_i - \mu}^2  \leq C_1 \sigma^2 d \leq  1/C_1.$
    \item[(c)] The fraction of noisy points $\frac{|\calnt t|}{n} \leq \eta +C_1 \sqrt{\log(1/\delta) / n} \leq \eta + 1 /C_1$. 
    \item[(d)] For any cluster $\mu \in \{\pm \mu_1, \pm \mu_2\}$ and any $0\leq t \leq T-1$, we have
    \[ \frac 1 4 - C_1 \sqrt{\f{\log(1/\delta)}{n}} \leq \f 1n|\imt {\mu} t| \leq \frac 1 4 + C_1 \sqrt{\f{\log(1/\delta)}{n}}.\]
\end{enumerate}
\end{restatable}
Now, recall that Lemma~\ref{lemma:candidate.subnetwork} shows that a large fraction of the neurons will `capture' at least one of the four cluster centers with a normalized correlation of $\langle w_j^{(0)} / \snorm{\wt 0_j},\mu_s\rangle \geq \Omega(1/\sqrt{d})$.  Since the within-cluster variance is of order $\sigma = O(1/\sqrt d)$, there is not enough signal for these neurons to capture all \textit{samples} within each cluster.  However, the following lemma demonstrates that capturing the cluster mean with a normalized correlation threshold of order $1/\sqrt d$ suffices to guarantee that a strictly larger portion of the samples from that cluster will be captured than not.  This technical lemma will be key to our subsequent analysis. 
\begin{restatable}{lem}{njargument}\label{lemma:Nj+.argument.all.times}
There exists a universal constant $C_2>1$ such that for any $\delta \in (0,1/2)$, for all $C>1$ large enough,
with probability at least $1-2\delta$, both Lemma~\ref{lemma:data.initialization} and the following event holds.  For any $j\in [m]$ satisfying $\sip{\wt 0_j / \snorm{\wt 0_j}}{\mu} \geq 1/(3C_0 \sqrt d)$ for some $\mu \in \{\pm \mu_1, \pm \mu_2\}$, it holds that
\[
\sum_{i\in \imct {+\mu}{0}}  \phi'(\sip{\wt 0_j}{\xit i0} ) - \sum_{i\in \imct {-\mu}{0}}  \phi'(\sip{\wt 0_j}{\xit i0} )\geq \f{ n}{C_2}.
\]
\end{restatable}

In light of the above, we introduce the following definition.
\begin{definition}
\label{def:good_run}
We define the event where all parts of Lemma~\ref{lemma:random.init}, Lemma~\ref{lemma:candidate.subnetwork} (with $C_0=4^5 \cdot 1024^2 \exp(4)$),  Lemma~\ref{lemma:data.initialization}, and Lemma~\ref{lemma:Nj+.argument.all.times} hold a \emph{good run}.  
\end{definition}

By the above lemmas, we know that for any $\delta \in (0,1/2)$, for all $C>1$ large enough, a good run occurs with probability at least $1-4\delta$.  In the remainder of this section, we will assume that a good run occurs.  

\subsection{Sufficient Conditions for a Large Margin Classifier via a Good Subnetwork}\label{sec:sufficient.condtiions.for.large.margin}
Our proof will rely upon the notion of a good \textit{subnetwork} of the neural network.  
For index set $\tilde J \subset [m]$ and matrix $W\in \R^{m\times d}$, denote by $W_{\tilde J} \in \R^{|\tilde J|\times d}$ as the sub-matrix of $W$ consisting of rows with indices from $\tilde J$.  Denote by $f^{\tilde J}(x; \cdot)$ the subnetwork consisting of rows from $\tilde J$,
\[ f^{\tilde J}(x;W) := \sum_{j\in \tilde J} a_j \sigma(\sip{w_j}{x}).\]
The below lemma demonstrates that in order to show that the neural network produces a good margin, it suffices to show that there exists a large subnetwork that produces a good margin provided that the weights of the network are bounded.
\begin{restatable}{lem}{goodsubnetworksuffices}\label{lemma:good.subnetwork.suffices}
Let $J\subset [m]$, and denote $J^c = [m] \setminus J$.  If $W\in \R^{m\times d}$ is such that $\snorm{W}_F\leq 1$ and there is a constant $C_f>1$ such that $yf^J(x;W)\geq 1/C_f$ for some $(x,y)\in \R^d\times \{\pm 1\}$, then provided $\norm{x}\leq 2$ and $|J^c|/m \leq 1/(16C_f^2)$, we have 
$yf(x;W) \geq 1/(2C_f)$.
\end{restatable}

Lemma~\ref{lemma:good.subnetwork.suffices} demonstrates that in order to show the neural network classifies an example correctly, it suffices to identify a large subnetwork that does so.  The rest of our proof is dedicated to showing that this happens.  The subnetwork that performs well is defined in terms of the neurons $j \in J_{\pm \mu_1}\cup J_{\pm \mu_2}$, where the index sets $J_{\pm \mu_1}\cup J_{\pm \mu_2}$ are defined in Lemma~\ref{lemma:candidate.subnetwork} and are shown to constitute a large fraction of all of the neurons: for each $\mu \in \{\pm \mu_1, \pm \mu_2\}$, the set $J_{\mu}$ has cardinality at least $|J_{\mu}| \geq \f m 4(1-1/C_0)^2$, where $C_0>1$ is a large constant.  We next define two conditions that we will show suffice for showing this subnetwork classifies examples correctly, which we refer to as the \textit{neuron alignment condition} and the \textit{almost-orthogonality condition}.  We describe the first of these below.
\begin{condition}[Neuron alignment condition]
\label{condition:alignment.condition} We say that the \emph{neuron alignment condition} holds at time $t$ if 
the subsets of neurons $J_{\pm \mu_1}$ and $J_{\pm \mu_2}$ defined in Lemma~\ref{lemma:candidate.subnetwork} satisfy the following: for every $\mu \in \{ \pm \mu_1, \pm \mu_2\}$, and for all $j\in J_{\mu}$,
 \begin{align*}
      \phi'(\sip{\wt {t}_j} {\xit {k}{t}}) &= 1\,\,\, \text{for all } k\in \imt {\mu}{t}, \quad \text{and} \quad \phi'(\sip{\wt {t}_j} {\xit {k}{t}}) = 0\,\,\, \text{for all } k\in \imt {-\mu}{t}.
    \end{align*}
\end{condition}
The neuron alignment condition loosely states that there is a substantial number of neurons (the neurons in the sets $J_{\pm \mu_1}\cup J_{\pm \mu_2}$) that completely capture 
each of the clusters in the sense that all samples within each cluster have the same ReLU activation, which is ``on'' on one of the clusters and ``off'' on the opposing cluster.  By Lemma~\ref{lemma:candidate.subnetwork}, we know that there is a large fraction of neurons that catch the cluster \textit{means} $\{\pm \mu_1, \pm \mu_2\}$ at initialization. However, as we argued prior to Lemma~\ref{lemma:Nj+.argument.all.times}, because the normalized correlation between the neurons at initialization and the cluster means is of order $1/\sqrt d$ while the variance within each cluster is also of order $1/\sqrt d$, a substantial portion of the examples within each cluster will \textit{not} be captured by a neuron at initialization.  We briefly note here that in the next section, we will show that a single step of gradient descent suffices to address this problem.  

We next introduce the notion of \textit{almost-orthogonality}, which will be key to showing that the subnetwork is able to classify examples correctly with a positive margin.   This condition ensures that the $J_{\pm \mu_1}$ neurons capture the $\pm \mu_1$ clusters, and are \textit{almost orthogonal} to data from the $\pm \mu_2$ clusters, and vice versa for the $J_{\pm \mu_2}$ neurons.  In particular, this will allow for us to say that the subnetwork satisfies $f^J(x; W) \approx |\sip{\mu_1}{x}| - |\sip{\mu_2}{x}|$, which one can verify produces a good margin for clean data $(x,\tilde y)\sim \pclean$. 
\begin{condition}[Almost-orthogonality]\label{condition:almost.orthogonal}
We say \emph{almost-orthogonality holds up to time $\tau$} if for all $t\leq \tau$,
\begin{align*}
\text{for all } j\in J_{\pm \mu_1},\quad |\sip{\wt t_j}{\mu_2}| &\leq 3 \alpha |a_j|, \quad \text{and}\quad  \text{for all } j\in J_{\pm \mu_2},\quad  |\sip{\wt t_j}{\mu_1}| \leq 3  \alpha |a_j|.
\end{align*}
\end{condition}
The almost-orthogonality condition ensures that the projection of the $J_{\pm \mu_1}$ (resp. $J_{\pm \mu_2}$) neurons onto the space spanned by $\mu_2$ (resp. $\mu_1$) remains small for all iterates of gradient descent up to time $\tau$.

In the next lemma, we show that the combination of neuron alignment and almost-orthogonality suffices to produce a good subnetwork margin.  Note that we consider times $t \geq 1$ with foresight, as we shall eventually show that neuron alignment and almost-orthogonality hold for all $t \geq 1$. 
%
\begin{restatable}{lem}{subnetworkmargin}
\label{lemma:subnetwork.margin.time.t1}
Let $J = J_{\pm \mu_1}\cup J_{\pm \mu_2}$, where the sets $J_{\pm \mu_1}$ and $J_{\pm \mu_2}$ are defined in Lemma~\ref{lemma:candidate.subnetwork}.
 Suppose that neuron alignment (Condition~\ref{condition:alignment.condition}) and almost-orthogonality~(Condition~\ref{condition:almost.orthogonal}) hold at times $\tau=1, \dots, T-1=1/(4\alpha)$.
 Then, on a good run,
 for all $C>1$ large enough, at time $T = 1+1/(4\alpha)$, we have $\snorm{\Wt {T}}_F\leq 1$, and that
 \begin{align*} 
\text{ for all } i\in \calct {t},\quad \yit i{t} f^J(\xit i{t}; \Wt {T}) &\geq \frac{1}{C_3} > 0,\quad \text{and}\\
\text{ for all } i\in \calnt {t},\quad \yit i{t} f^J(\xit i{t}; \Wt {T}) &\leq - \frac{1}{C_3} < 0,
\end{align*} 
where $C_3 = 4096 \exp(2) / (1-1/C_0)^2$ and $C_0>1$ is the constant from Lemma~\ref{lemma:candidate.subnetwork}. 
\end{restatable}

Lemma~\ref{lemma:subnetwork.margin.time.t1} demonstrates that in order to show that a given subnetwork $f^J(x; W)$ accurately classifies all of the clean data, it suffices to show that neuron alignment and almost-orthogonality hold for a sufficiently large (but constant) number of steps.  By Lemma~\ref{lemma:good.subnetwork.suffices}, this translates to a guarantee for the entire network $f(x; W)$ if we can show that the subnetwork is sufficiently large, which we can ensure by taking $J$ as the union of the sets $J_{\pm \mu_1},J_{\pm \mu_2}\subset [m]$ as in Lemma~\ref{lemma:candidate.subnetwork} and by taking the constant $C_0>1$ from that lemma to be sufficiently large.  Thus, to complete the proof, we need only verify that neuron alignment and almost-orthogonality hold for a sufficiently large but constant number of steps.  This is what we show in the next subsection.

We note that both neuron alignment and almost-orthogonality are needed in order to ensure that the subnetwork $f^J(x;W)$ behaves like the simple classifier $|\sip{\mu_1}{x}| - |\sip{\mu_2}{x}|$.  For instance, consider what happens if half of the positive neurons (corresponding to $j\in [m]$ with $a_j>0$) are proportional to $\mu_1 + 100\mu_2$ and the other half are proportional to $-\mu_1-100\mu_2$, and likewise half of the negative neurons are proportional to $\mu_2$ and the other half are proportional to $-\mu_2$.  Then the neuron alignment condition would hold, but almost-orthogonality would not hold, and the network would behave like the predictor $|\sip{\mu_1+100\mu_2}{x}| - |\sip{\mu_2}{x}|$ and not generalize well.   Thus, in addition to showing that the neurons are highly correlated with the cluster means from a given class, we must also show that they are nearly orthogonal to the cluster means from the opposite class. 

\subsection{Gradient Descent Produces a Large Margin Classifier}
As mentioned previously, we cannot expect neuron alignment to hold at initialization, as the random features that define the subnetwork $f^J$ have per-neuron normalized correlations $\sip{\wt 0_j/\snorm{\wt 0_j}}{\mu_1}$ of order $O(1/\sqrt d)$, while the fluctuations within each cluster $\sip{\wt 0_j/\snorm{\wt 0_j}}{\mu_s - x_i}$ are also of order $\sigma = O(1/\sqrt d)$.  This means that many samples $x_i$ belonging to a cluster $\mu_s$ will satisfy $\sgn(\sip{\wt 0_j}{x_i})\neq \sgn(\sip{\wt 0_j}{\mu_s})$, preventing the satisfaction of the neuron alignment condition.  This is where Lemma~\ref{lemma:Nj+.argument.all.times} will play a role: although the random features have normalized correlations of order $\Theta(1/\sqrt d)$ with the cluster means, this signal provides an `edge' in terms of the ReLU activations of samples within the cluster.  That is, having $\sip{\wt 0_j/\snorm{\wt 0_j}}{\mu_s} \geq c/\sqrt d$ is sufficient to guarantee that the fraction of samples within the $\mu_s$ cluster sharing the same sign as $\sip{\wt 0_j}{\mu_s}$ is at least $\nicefrac 12 + \Delta$ for some absolute constant $\Delta>0$.  This provides enough signal for gradient descent to latch onto and `amplify' the normalized per-neuron correlations from $\sip{\wt 0_j/\snorm{\wt 0_j}}{\mu_s}\geq c / \sqrt d$ to $\sip{\wt 1_j / \snorm{\wt 1_j}}{\mu_2} \geq c'$ after one sufficiently large step.  Since now the normalized correlations are of order $1$ while the within-cluster fluctuations are of order $1/\sqrt d$, this allows for neuron alignment to hold after a single step of gradient descent.

\begin{restatable}{lem}{nacholds}\label{lemma:nac.holds.time.1}  
For $C>1$ sufficiently large, on a good run Condition~\ref{condition:alignment.condition} holds at time $t=1$. 
Moreover, letting $C_2>1$ denote the constant from Lemma~\ref{lemma:Nj+.argument.all.times}, the per-neuron normalized correlations satisfy
\[  \text{for every $\mu \in \{ \pm \mu_1, \pm \mu_2\}$ and every $j\in J_{\mu}$,}\quad \ip{\wt 1_j / \snorm{\wt 1_j}}{\mu} \geq \f{1}{ 16 C_2}.\]
\end{restatable}

We now know that neuron alignment holds at time $t=1$, and that the number of neurons that are characterized by the alignment condition is quite large (precisely, $m(1-1/C_0)^2$ for a large constant $C_0$).  By Lemma~\ref{lemma:subnetwork.margin.time.t1}, if we can show that $(i)$ neuron alignment continues to hold for a certain number of steps, $(ii)$ almost-orthogonality holds throughout these steps, and $(iii)$ we early-stop so that the hidden layer weights are not too large, then there will be a large subnetwork that classifies clean examples with a positive margin.  In the next lemma, we inductively argue that this is the case.

\begin{restatable}{lem}{nacaohold}\label{lemma:nac.orthogonality.holds.all.times}
For $C>1$ sufficiently large,
on a good run, for every time $t=1,\dots, 1 / (4\alpha)$, neuron alignment (Condition~\ref{condition:alignment.condition}) holds at time $t$ and almost-orthogonality (Condition~\ref{condition:almost.orthogonal}) holds up to time $t$.
\end{restatable}

 We emphasize that although Lemma~\ref{lemma:nac.holds.time.1} shows that neuron alignment holds at time $t=1$, this is not sufficient to guarantee generalization since we must ensure that the positive (respectively negative) neurons are not highly correlated to $\pm \mu_2$ (respectively $\pm \mu_1$) since this could result in inaccurate predictions as outlined at the end of Section~\ref{sec:sufficient.condtiions.for.large.margin}.  This potential problem is precisely what almost-orthogonality (Condition~\ref{condition:almost.orthogonal}) prevents, and  Lemma~\ref{lemma:nac.orthogonality.holds.all.times} shows that by running gradient descent for a large (but constant) number of steps, we can guarantee that both neuron alignment and almost-orthogonality hold up to time $T-1=1/(4\alpha)$.   By Lemma~\ref{lemma:subnetwork.margin.time.t1}, this implies that at time $T$ the subnetwork $f^J(x;\Wt T)$ classifies all of the clean examples correctly, and by Lemma~\ref{lemma:good.subnetwork.suffices} this implies that the full network $f(x;\Wt T)$ classifies all of the clean examples correctly with small $\snorm{\Wt T}_F$.  From here the proof of Theorem~\ref{thm:final.generalization} is a straightforward Rademacher-complexity based argument; the details are provided in Appendix~\ref{sec:mainthm}. 


\section{Discussion}\label{sec:discussion}
We have shown that two-layer neural networks with ReLU activations trained by gradient descent can achieve small test error on a distribution for which linear classifiers perform no better than random guessing.  We developed a novel proof technique that detailed how using a random initialization provides a collection of random features that gradient descent is able to amplify into stronger, useful features for prediction.  Importantly, our analysis holds when a constant fraction of the training labels are arbitrarily 
corrupted.  

Our analysis requires the usage of early-stopping, so that gradient descent only runs for $T = O(1)$ iterations.  We showed that running gradient descent for $O(1)$ iterations is sufficient to achieve classification error close to the noise rate.  The reason $T = O(1)$ is helpful is that under this assumption, the weights assigned to each sample in the gradient descent updates (proportional to $-\ell'(y_i f(x_i; \Wt t))$) are not too small, so that the useful signals from each sample can be used to push the neural network weights in a good direction.  Early-stopping also allows for a uniform convergence-based argument for the generalization error of the trained network.   Without early-stopping, there is the potential for the neural network to overfit to noisy labels, and it is a natural question whether the network will still generalize near-optimally when it has overfit (i.e., whether or overfitting is `benign' as in~\citet{bartlett2020.benignoverfitting.pnas,frei2022benign}). 

\begin{figure}
    \centering
    \includegraphics[width=0.8\textwidth]{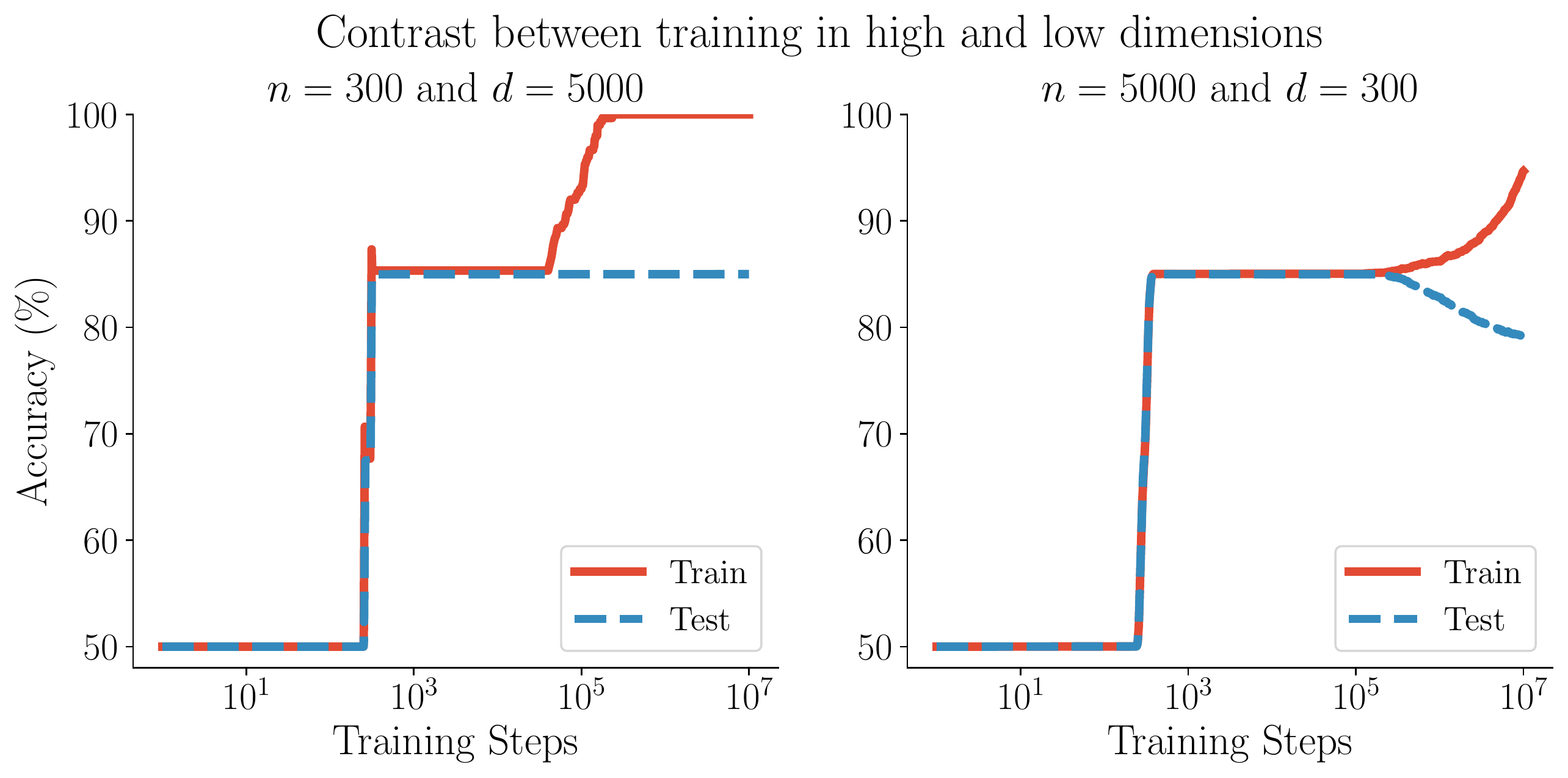}
    \caption{Training and validation accuracy for a two-layer ReLU network with $m=400$ neurons trained on $\pnoise$ (within-cluster variance of $\sigma^2 = 1/d^{1.2}$) when 15\% of the labels within each cluster are flipped to the opposing cluster.  When the network begins to overfit to the noisy labels, the test accuracy decreases in the $n\gg d$ setting while it remains optimal in the $d\gg n$ setting. } 
    \label{fig:highd.vs.lowd}
\end{figure}

In Figure~\ref{fig:highd.vs.lowd}, we examine the behavior of two-layer ReLU networks trained by gradient descent on the logistic loss for the 2-XOR distribution we consider when 15\% of the labels are flipped (for full experimental details, see Appendix~\ref{appendix:experiment}).
We consider two distinct settings: a low-dimensional setting where $n\gg d$ and a high-dimensional setting where $d\gg n$.  In the low-dimensional setting, the test accuracy decreases after the network overfits to the noisy training data, while in the high-dimensional setting the test accuracy remains at the optimal 85\% level even after reaching the point of interpolation.  Since our assumptions only require that the number of samples and dimension are not super-exponential in the other, this suggests that we would need to introduce new techniques, separately tailored to the low-dimensional and high-dimensional settings, in order to characterize the generalization behavior of the network after the point of interpolation.  

Another natural direction for future research is to understand whether or not the random feature amplification phenomenon that we identified in two-layer networks has an analogue in deeper networks. Yet another direction is to understand whether this analysis technique can be generalized to settings with more cluster centers.

\section*{Acknowledgements}
We thank the anonymous reviewers for their numerous suggestions which helped improve the presentation of the paper.  We thank Hongren Yan and Yutong Wang for pointing out issues in a previous version of this paper.  We gratefully acknowledge the support of the NSF and the Simons Foundation for
the Collaboration on the Theoretical Foundations of Deep Learning through awards DMS-2023505, DMS-2031883, 
and \#814639.

\appendix
\newpage
{\hypersetup{linkcolor=Black}\tableofcontents}

\section{Omitted Proofs from Section~\ref{sec:proofs}}\label{a:main_paper_omitted_proofs}
In this appendix, we provide the proofs for all of the lemmas in Section~\ref{sec:proofs}.  In Section~\ref{sec:concentration}, we provide the proofs for the lemmas that involve concentration inequalities: Lemmas~\ref{lemma:random.init},~\ref{lemma:candidate.subnetwork},~\ref{lemma:data.initialization}, and~\ref{lemma:Nj+.argument.all.times}.  Next, we prove Lemmas~\ref{lemma:good.subnetwork.suffices} and~\ref{lemma:subnetwork.margin.time.t1}, which show that producing a good subnetwork suffices for the neural network to classify the clean examples correctly and provide a sufficient condition for producing a good subnetwork.  In Section~\ref{sec:gd.produces.good.subnetwork}, we show that gradient descent produces a good subnetwork.  Finally, in Section~\ref{sec:featuremapschange}, we provide a proof of Proposition~\ref{prop:feature.maps.different}, which emphasizes that the feature maps produced by gradient descent differ significantly from those found at initialization.

We remind the reader that throughout this section we assume that Assumptions~\ref{a:dimension}-\ref{a:stepsize} are in effect. We also note that $C>1$ is always used to denote the constant used in these assumptions. 
\subsection{Random Initialization and Sample Properties}\label{sec:concentration}
In this subsection we provide the proofs for Lemmas~\ref{lemma:random.init},~\ref{lemma:candidate.subnetwork},~\ref{lemma:data.initialization}, and~\ref{lemma:Nj+.argument.all.times}.  
\subsubsection{Proof of Lemma~\ref{lemma:random.init}}
We restate the lemma here for the reader's convenience. 
\randominit*
\begin{proof}
We first prove the first part of the lemma.  Note that for fixed $j\in [m]$, there are i.i.d. $z_i\sim \mathsf{N}(0,1)$ such that
\[ \snorm{\wt 0_j}^2 = \summ i d (\wt 0_j)_i^2 = \sinit^2 \summ i d z_i^2 \sim \sinit^2 \cdot \chi^2(d).\]
By concentration of the $\chi^2$ distribution \citep[see,][Example~2.11]{wainwright}, for any $t\in (0,1)$,
\[ \P \l( \l| \f 1 {d\sinit^2} \snorm{\wt 0_j}^2 - 1\r| \geq t\r) \leq 2 \exp(-dt^2/8).\]

In particular, by taking $t = \sqrt{8 \log(4m/\delta)/d}$, we have that for fixed $j\in [m]$, with probability at least $1-\delta/2m$,
\begin{equation*} \label{eq:wj0.norm}
\f 1 2 d \sinit^2 \leq (1 - t) d \sinit^2 \leq \snorm{\wt 0_j}^2 \leq (1 + t) d \sinit^2 \leq \f 3 2 d\sinit^2,
\end{equation*}
where we have used Assumption~\ref{a:dimension}, that is, $d \geq C \log(m/\delta)$%
for a sufficiently large constant $C>1$ implies $t\leq 1/2$.  Applying a union bound over $j\in [m]$ shows that the bound on the norms at initialization holds over all $j$ with probability at least $1-\delta/2$.

For the neuron-counting argument, let $z\sim \mathsf N(0,1)$ denote a standard normal random variable.  Denote by $p$ the probability
\begin{align*}
 p &:=  \P(|\sip{\wt 0_j}{\mu}| \geq \sinit / (2 C_0)) = \P_{z\sim \mathsf N(0,1)}(|z| \geq 1 / (2 C_0)).
 \end{align*} 
By anti-concentration of the Gaussian, we have
\begin{equation} \label{eq:1-p.lb}
 1-p = \P(|z| \leq 1 / (2 C_0)) \leq 1 / (2 C_0)
 \end{equation} 
Define random variables $U_j := \ind(|\sip{\wt 0_j}{\mu}| \geq \sinit / (2 C_0))$.  Since $U_j$ are 1-sub-Gaussian, Hoeffding's inequality implies for any $t\geq 0$,
\[ \P\l(\l| \summ j m (U_j - p) \r| \geq t\r) \leq 2 \exp (-t^2/2m).\]
Thus, with probability at least $1-\delta/2$, we have
\begin{align*}
\summ j m \ind(|\sip{\wt 0_j}{\mu}| \geq \sinit / (2 C_0)) &= \sum_{j=1}^m U_j \\
&\geq m p- \sqrt{2m \log(4/\delta)} \\
&\overset{(i)}\geq m\cdot \l( 1 - \f{1}{2C_0} - \sqrt{\f{2\log(4/\delta)}m} \r),
\end{align*}
where in $(i)$ we use~\eqref{eq:1-p.lb}.  

Taking a union bound over the first and second parts of the proof shows that both claims hold simultaneously with probability at least $1-\delta$. 
\end{proof}

\subsubsection{Proof of Lemma~\ref{lemma:candidate.subnetwork}}
We restate and prove Lemma~\ref{lemma:candidate.subnetwork} below.
\candidatesubnetwork*

\begin{proof}
Fix $C_0>1$.  Apply Lemma~\ref{lemma:random.init} to the positive neurons $j$ satisfying $a_j>0$ with $\mu_1$.  This tells us that with probability at least $1-\delta/16$, 
\begin{align*}
|J_{\pm \mu_1}| := |\{ j : a_j>0,\ |\sip{\wt 0_j}{\mu_1}| \geq \sinit / 2C_0 \} | &\geq \f m 2 \l( 1 - \f 1 {2C_0} -  \sqrt{\f{ 4 \log(32/\delta)}{m}} \r) \\&\geq \f m 2 (1 - 1/C_0),
\end{align*}
where we have used Assumption~\ref{a:width} so that we may take $m\geq 16 C_0^2 \log(32/\delta)$. 
Notice that for $\wt 0_j\neq 0$, the event $\{ \sip{\wt 0_j}{\mu_1} > 0\} $ depends only on the angle between $\wt 0_j$ and $\mu_1$, while the event $\{ |\sip{\wt 0_j}{\mu_1}|\geq \xi \sinit\}$ depends only on the product $\snorm{\wt 0_j}\snorm{\mu_1}$.  Thus the sign of $\sip{\wt 0_j}{\mu_1}$ is independent of whether or not $j\in J_{\pm \mu_1}$.  Since $\P(\sip{\wt 0_j}{\mu_1}>0)=1/2$, by Hoeffding's inequality we know that with probability at least $1-\delta/16$, at least $\f 12 - \sqrt{\f{4\log(32/\delta)}m}$ fraction of the indices in $J_{\pm \mu_1}$ satisfy $\sip{\wt 0_j}{\mu_1} >0$ and likewise at least $\f 12 - \sqrt{\f{4\log(32/\delta)}m}$ fraction of the indices in $J_{\pm \mu_1}$ satisfy $\sip{\wt 0_j}{\mu_1} <0$.  In particular, taking a union bound we have with probability at least $1-\delta/8$,
\begin{align*}
\left|\left\{ j : a_j>0,\ \sip{\wt 0_j}{\mu_1} \geq \frac{\sinit}{2C_0}\right\} \right| &\geq \f m 4 (1 - 1/C_0) \cdot\l( 1 - \sqrt{\f{4\log(32/\delta)}m}\r) \\ &\geq \f m 4 (1 - 1/C_0)^2,    
\end{align*}
where we have again used Assumption~\ref{a:width}. 
We can argue similarly for neurons satisfying $\sip{\wt 0_j}{-\mu_1}\geq \sinit/2C_0$ and neurons satisfying $\sip{\wt 0_j}{\pm \mu_2}\geq \sinit/2C_0$ to get that with probability at least $1-\delta/2$,
\begin{align*}
     |\{ j : a_j>0,\ \sip{\wt 0_j}{\mu_1} \geq \sinit/2C_0\} | \geq \f m 4 (1 - 1/C_0)^2, \\
    | \{ j : a_j>0,\ \sip{\wt 0_j}{-\mu_1} \geq \sinit / 2C_0\} | \geq \f m 4 (1 - 1/C_0)^2, \\
    | \{ j : a_j<0,\ \sip{\wt 0_j}{\mu_2} \geq \sinit / 2C_0 \}\geq \f m 4 (1 - 1/C_0)^2, \\
    | \{ j : a_j<0,\ \sip{\wt 0_j}{-\mu_2} \geq  \sinit / 2C_0 \} | \geq \f m 4 (1 - 1/C_0)^2.
\end{align*}
By Lemma~\ref{lemma:random.init}, we know that with probability at least $1-\delta/2$, $\snorm{\wt 0_j}\leq \f 32 \sinit \sqrt{d}$, and thus whenever $\sip{\wt 0_j}{\mu_1}\geq \sinit/2C_0$, we have
\[ \ip{\f{\wt 0_j}{\snorm{\wt 0_j}}}{\mu_1} \geq \f{ \sinit}{2 C_0 \snorm{\wt 0_j}} \geq \f{1}{3 C_0 \sqrt d}. \] 
Taking a union bound, we see that with probability at least $1-\delta$,
\begin{align*}
    |J_{+\mu_1}| &:= |\{ j : a_j>0,\ \sip{\wt 0_j / \snorm{\wt 0_j}}{\mu_1} \geq 1/(3 C_0 \sqrt d)\} | \geq \f m 4 (1 - 1/C_0 )^2, \\
    |J_{-\mu_1}| &:=| \{ j : a_j>0,\ \sip{\wt 0_j / \snorm{\wt 0_j}}{-\mu_1} \geq 1/(3 C_0 \sqrt d)\} | \geq \f m 4 (1 - 1/C_0)^2, \\
    |J_{+\mu_2}| &:= | \{ j : a_j<0,\ \sip{\wt 0_j / \snorm{\wt 0_j}}{\mu_2} \geq 1/(3C_0 \sqrt d) \}\geq \f m 4 (1 - 1/C_0)^2, \\
    |J_{-\mu_2}| &:= | \{ j : a_j<0,\ \sip{\wt 0_j / \snorm{\wt 0_j}}{-\mu_2} \geq 1 / (3 C_0 \sqrt d) \} | \geq \f m 4 (1 - 1/C_0)^2.
\end{align*}
\end{proof}

\subsubsection{Proof of Lemma~\ref{lemma:data.initialization}}
We restate and prove Lemma~\ref{lemma:data.initialization} below. 
\datainitialization*

\begin{proof} 
We shall show that each part of the lemma holds with a large enough probability and then take a union bound to establish our claim.

\textit{Proof of parts~(a) and (b):} We consider the case $i\in \imt {+\mu_1}{t}$.  The cases of $i\in \imt {\mu}{t}$ for $\mu \in \{-\mu_1, \pm \mu_2\}$ follow using identical arguments. 

Let $i\in \imt {\mu_1}t$.  We begin by noting that, since $\snorm{\mu}=1$, we have by Cauchy--Schwarz,
\begin{align*}
    \sip{x_i}{\mu_1} &= \sip{x_i-\mu_1}{ \mu_1 } + \sip{\mu_1}{\mu_1}\\
    &\geq 1 - \snorm{x_i-\mu}.\numberthis \label{eq:xixk.correlation.positive}
\end{align*}
Therefore, to derive a lower bound on $\sip{x_i}{\mu_1}$ when $i\in\imt {+\mu_1}t$, it suffices to derive an upper bound on $\snorm{x_i-\mu}$ for each $i$, so that we will first prove part (b).  

Since $(x_i-\mu)/\sigma$ is isotropic and log-concave, by concentration of the Euclidean norm of isotropic log-concave random vectors~\cite[Theorem 1]{adamczak2014paouris}, there is a universal constant $c>0$ such that,
\[ \P( \snorm{(x_i - \mu) / \sigma} \geq c u \sqrt d) \leq  \exp(-c u \sqrt d).\]
In particular, using Assumption~\ref{a:dimension}, we can take $d \geq \log^2(32n/\delta)/c^2$ so that $\exp(-cu \sqrt d) \leq \delta/(32n)$ and thus we have with probability at least $1-\delta/32$, for all $i\in \imt {+\mu_1}t$, 
 \[ \snorm{x_i  - \mu} \leq c \sigma \sqrt d.\]
This, along with Assumption~\ref{a:sigma} proves part (b).  

Using~\eqref{eq:xixk.correlation.positive} and Assumption~\ref{a:sigma} so that $c \sigma \sqrt d < 1$, we have
\begin{align*}
    \sip{x_i}{\mu_1} \geq 1 - c \sigma \sqrt d,
\end{align*}
which proves the first half of part (a) of the lemma when $i\in \imt{+\mu_1}{t}$.  When $i\in \imt{+\mu_1}t$, the cluster mean $\mu^\perp$ orthogonal to $\mu_1$ is $\mu_2$, and so we have,
\begin{align*}
    |\sip{x_i}{\mu_2}| &= |\sip{x_i-\mu_1}{\mu_2 }| 
    \leq \snorm{x_i - \mu_1}
    \leq c \sigma \sqrt d,
\end{align*}
which completes the proof of the second part of (a) when $\mu = \mu_1$.  Taking a union bound over $\mu \in \{\pm \mu_1, \pm \mu_2\}$ shows that parts (a) and (b) hold with probability at least $1-\delta/8$. 

\textit{Proof of part~(c):} We note that $\{\ind(y_i \neq \tildeyit it)\}_{i=1}^n$ are a collection of $n$ i.i.d. random variables bounded by one with expectation equal to the noise rate $\eta$.  For some absolute constant $c>1$, Hoeffding's inequality therefore gives, for any $u \geq 0$,
\[ \P\l(\f 1 n\summ i n \ind(y_i \neq \tildeyit it) - \eta \geq u \r) = \P\l(\f{|\calnt t|}n - \eta \geq u\r) \leq \exp\l( - \f{n u^2}{2 c}\r).\]
In particular, for $u = \sqrt{\f{2c\log(2/\delta)}n}$, by Assumption~\ref{a:samples} we have with probability at least $1-\delta/2$,  $|\calnt t|/n \leq \eta + \sqrt{2c \log(2/\delta)/n} \leq \eta + 1/ C_1$ by Assumption~\ref{a:noisy.fraction}.   


\textit{Proof of part~(d):} We consider the case $\mu = +\mu_1$ with identical arguments holding for $\mu \in \{-\mu_1, \pm \mu_2\}$. Notice that 
the random variables $\{\ind(i\in \imt {+\mu_1}{t} )\}_{i=1}^n$ are i.i.d. Bernoulli with mean $1/4$, since the samples  are drawn uniformly from the four clusters $\{\pm \mu_1, \pm \mu_2 \}$. 
Thus, by Hoeffding's inequality, for some absolute constant $c>1$, we have with probability at least $1-\delta/16$,
\begin{equation}\label{eq:Imu1.lb}
    \f 1 4 - c\sqrt{\f{\log(32/\delta)}n} \leq \f 1 n |\imt {+\mu_1} 0| \leq \f 1 4 + c\sqrt{\f{\log(32/\delta)}n}.
\end{equation}
Since $\delta \in (0,1/2)$, there is a larger constant $c'>0$ such that $c \sqrt{\frac{\log(32/\delta)}{n}}\leq c' \sqrt{\frac{\log(1/\delta)}{n}}$. Taking a union bound  over the four clusters shows that part (d) holds with probability at least $1-\delta/4$. 

Thus all four parts (a), (b), (c), (d) hold with probability at least $1-\delta$. 
\end{proof}

\subsubsection{Proof of Lemma~\ref{lemma:Nj+.argument.all.times}}
We restate and prove Lemma~\ref{lemma:Nj+.argument.all.times} below. 
\njargument*
\begin{proof}We shall prove this lemma in two parts. First, we shall define a ``good event'' $\calE$ that occurs with probability at least $1-2\delta$. Then via a deterministic argument, we shall show that the lemma holds whenever this good event occurs.

\paragraph{Defining the good event.}
Fix some $\mu \in \{\pm \mu_1, \pm \mu_2\}$.   By definition of $\pnoise$, there are $\zit it  \iid \pclust$  such that
\begin{align*}
    N_ {\mu}(j) &:= \sum_{i\in \imct {\mu}{t}} \phi'(\sip{\wt 0_j}{x_i}) = \sum_{i\in \imct {\mu}{t}} \ind(\sip{\wt 0_j}{\mu} + \sigma \sip{\zit it}{\wt 0_j} > 0).
\end{align*}
Similarly, there are $\uit it \iid \pclust$ such that
    \begin{align*} 
    N_{-\mu}(j)  &:= \sum_{i\in \imct {-\mu}{t}} \phi'(\sip{\wt 0_j}{x_i}) = \sum_{i\in \imct {-\mu}{t}} \ind(-\sip{\wt 0_j}{\mu} + \sigma \sip{\uit it}{\wt 0_j} > 0).
    \end{align*}
Thus, if we define,
\[p := \P_{z\sim \pclust}(\sip{ \wt 0_j}{\mu} + \sigma \sip{z}{\wt 0_j} > 0),\]
then we have,
\begin{equation*}  \label{eq:Nj.distribution}
N_{\mu}(j) \sim \Binomial(|\imct {\mu}{t}|, p)\quad\text{and}\quad N_ {-\mu}(j) \sim \Binomial(|\imct {-\mu}{t}|, 1-p).
\end{equation*} 
This motivates deriving upper and lower bounds for the cardinality of the sets $\imct{\mu}{t}$ and $\imct{-\mu}{t}$.   To do so, we first note that with probability at least $1-\delta$, all of the events in Lemma~\ref{lemma:data.initialization} hold.  In particular, by Part~(d) of that lemma, we have with probability at least $1-\delta$,
for any $\mu \in \{\pm \mu_1, \pm \mu_2\}$,
\begin{equation}\label{eq:Imu1.lb.alltimes}
    \f 1 4 - C_1\sqrt{\f{\log(1/\delta)}n} \leq \f 1 n |\imt {\mu} t| = \f 1 n (|\imct {\mu}{t }| + |\imnt {\mu}{t }|) \leq \f 1 4 + C_1\sqrt{\f{\log(1/\delta)}n}.
\end{equation}
We thus have with probability at least $1-\delta$, all of the events in Lemma~\ref{lemma:data.initialization} hold, and, for all $\mu \in \{\pm \mu_1, \pm \mu_2\}$,
\begin{align*}
&\f n 8 \overset{(i)}\leq \f n 4 \l( 1 - C_1 \sqrt{\f{\log(1/\delta)}n} - \f{|\calnt t|}{n} \r) \leq |\imct {\mu}{t}| \leq \f n 4 \l( 1 + C_1 \sqrt{\f{\log(1/\delta)}n}\r), \numberthis \label{eq:Icleanmu1.lb.alltimes}
\end{align*}
where the inequality $(i)$ uses Assumptions~\ref{a:samples} and~\ref{a:noisy.fraction}.  

Now, since $N_{\mu}(j) \sim \Binomial(|\imct {\mu}{t}|, p)$, by Hoeffding's inequality and a union bound (over the neurons and over the clusters), there is some $c>0$ such that with probability at least $1-\delta$, for all $\mu \in \{\pm \mu_1,\pm \mu_2\}$, for all $j\in [m]$,
\begin{equation} \label{eq:Nj.lb.intermediate.alltimes}
|\imct {\mu}{t}| \cdot \l( p - c \sqrt{\f{  \log(64m/\delta)}{|\imct {\mu}{t}|}} \r) \leq N_{\mu}(j) \leq   |\imct {\mu}{t}| \cdot \l( p + c \sqrt{\f{  \log(64m/\delta)}{|\imct {\mu}{t}|}}  \r).
\end{equation}
Let us define $\calE$ to be the event where the events in Lemma~\ref{lemma:data.initialization}, and inequalities~\eqref{eq:Icleanmu1.lb.alltimes} and~\eqref{eq:Nj.lb.intermediate.alltimes} all simultaneously hold. By a union bound this happens with probability at least $ 1-2\delta$. This shall determine the success probability of the lemma. 

\paragraph{Lemma holds whenever the \emph{good event} $\calE$ occurs.} In the remainder of the proof let us assume that this event $\calE$ occurs; we will show that the lemma holds as a deterministic consequence of these events.

Since the event $\calE$ occurs, for all $\mu \in \{\pm\mu_1,\pm\mu_2\}$ and all $j \in [m]$ we have,
\begin{align}\nonumber 
    N_{\mu} (j)&\overset{(i)}\geq |\imct {\mu}{t}|\l( p - c\sqrt{\f{\log(64m/\delta)}{|\imct {\mu}{t}|}}\r)  \\ \nonumber 
    &\overset{(ii)}\geq \f n4 \l(p - 3c \sqrt{\f{\log(64m/\delta)}n} \r)\l( 1 - C_1 \sqrt{\f{\log(1/\delta)}n} - \f{|\calnt t|}n\r)\\
    &\overset{(iii)}\geq \f n4 \l[ \l( 1 - \f{|\calnt t|}n\r)p - 4c \sqrt{\f{\log(64m/\delta)}n} \r].\label{eq:nplusmu1.alltimes}
\end{align}
Above, $(i)$ uses Eq.~\eqref{eq:Nj.lb.intermediate.alltimes}, while $(ii)$ uses Eq.~\eqref{eq:Icleanmu1.lb.alltimes}.  Inequality $(iii)$ uses Assumption~\ref{a:samples} so that $n \geq \log(64m/\delta)$ and by taking $c$ to be a larger constant.    Further, for all $\mu \in \{\pm \mu_1,\pm \mu_2\}$ and all $j \in [m]$, we have,
\begin{align}\nonumber 
    N_ {-\mu}(j) &\leq| \imct {-\mu}{t}|\l( (1-p) + c \sqrt{\f{\log(32 m/\delta)}{|I_{-\mu}|}} \r)  \\ \nonumber 
    &\overset{(i)}\leq \f n 4 \l( 1 + C_1 \sqrt{\f{\log(1/\delta)}n}\r) \l( (1-p) + 3c \sqrt{\f{\log(32 m/\delta)}{ n}} \r)  \\ 
    &\overset{(ii)}\leq \f n4  \l[ \l( 1 + 4 C_1 \sqrt{\f{\log(1/\delta)}n} \r)(1-p) + 4c \sqrt{\f{\log(64m/\delta)}n} \r] . \label{eq:nminusmu1.alltimes}
\end{align}
Above, $(i)$ uses~\eqref{eq:Icleanmu1.lb.alltimes} and $(ii)$ uses the assumption $n\geq C\log(1/\delta)$ given by~\ref{a:samples}.  Thus, we have shown that, when the good event $\calE$ occurs, then inequalities~\eqref{eq:nplusmu1.alltimes} and~\eqref{eq:nminusmu1.alltimes} hold. In the remainder of the proof, we will show that the lemma follows as a  consequence of the inequalities~\eqref{eq:nplusmu1.alltimes} and~\eqref{eq:nminusmu1.alltimes}.

In order to show $N_{\mu}(j) \gg N_{-\mu}(j)$, it suffices to show that $p$ is large enough so that there is sufficient `edge' for more samples to be captured by $w_j$ than not.  To this end, we have for any $j$ such that $\sip{\wt 0_j}{\mu}>0$,
\begin{align*} 
p &= \P_{z\sim \pclust} (\sip{\wt 0_j}{\mu} + \sigma \sip{z}{\wt 0_j} > 0) \\
&= \P_{z\sim \pclust}\l(  \ip{z}{\f{\wt 0_j}{\snorm{\wt 0_j}}}> -\f{  \sip{\wt 0_j}{\mu}}{\sigma \snorm{\wt 0_j}} \r)\\
&= \f 12 + \P_{z\sim \pclust}\l( \ip{z}{\f{\wt 0_j}{\snorm{\wt 0_j}}} \in \l[ - \f{\sip{\wt 0_j}{\mu}}{\sigma \snorm{\wt 0_j}} , 0\r] \r).\numberthis \label{eq:nj.p.definition.alltimes}
\end{align*} 
Recall that we are considering neurons $j\in [m]$ such that $\sip{\wt 0_j / \snorm{\wt 0_j}}{\mu} \geq 1/(3C_0\sqrt d)$. By assumption~\ref{a:sigma}, for $C$ sufficiently large we have $\sigma \sqrt d \leq 1/C \leq 3/C_0$ so that the inclusion $[-1/9, 0] \subset [-1/(3C_0 \sigma \sqrt d), 0]$ holds.   Thus, we have,
\begin{align*} 
p &\geq \f 12 + \P_{z\sim \pclust}\l( \ip{z}{\f{\wt 0_j}{\snorm{\wt 0_j}}} \in \l[ - \f{1}{3 C_0 \sigma \sqrt d} , 0\r] \r)\\
&\geq \f 12 +\P_{z\sim \pclust}\l( \ip{z}{\f{\wt 0_j}{\snorm{\wt 0_j}}} \in \l[ - \f 1 9, 0 \r] \r).
\end{align*}
Note that $\sip{z}{\wt 0_j/\snorm{\wt 0_j}}$ is the projection of a log-concave isotropic random vector onto the one dimensional subspace spanned by $\wt 0_j/\snorm{\wt 0_j}$, and thus by~\citet[Definition 1.2, Fact A.4]{diakonikolas2020massartstructured} there exists an absolute constant $c_1>0$ such that
\begin{equation}\label{eq:anticoncentration.logconcave}
\P_{z\sim \pclust} \l( \ip{z}{\f{\wt 0_j}{\snorm{\wt 0_j}}} \in \l[ - \f 1 9, 0 \r] \r) \geq c_1,
\end{equation}
and continuing from the previous display we thus have
\begin{align*}
    p\geq \f 12 + c_1. 
\label{eq:p0.lowerbound.nj+} \numberthis 
\end{align*}



We can thus use the inequalities given in events~\eqref{eq:nplusmu1.alltimes} and~\eqref{eq:nminusmu1.alltimes} to see that for any $\mu \in \{\pm \mu_1, \pm \mu_2\}$ and for any $j\in [m]$ such that $\sip{\wt 0_j / \snorm{\wt 0_j}}{\mu} \geq 1/(3C_0\sqrt d)$, 
\begin{align*}
   & N_{\mu}^{(0)}(j) - N_{-\mu}^{(0)}(j) \\&\qquad \geq \f n4 \l[ \l(1 - \f{|\calnt 0|}n\r)p - \l(1 + 4C_1 \sqrt{\f{\log(64m/\delta)}n} \r)(1-p) - 8c \sqrt{\f{\log(64m/\delta)}n}\r] \\
    &\qquad \geq \f n4 \l[ \l(2 - \f{|\calnt 0|}n\r) p -1 - 10c \sqrt{\f{\log(64m/\delta)}n} \r]\\ 
    &\qquad \overset{(i)}\geq\f n4 \l[ \l(2 - \f{|\calnt 0|}n\r) \l( \f 12 + c_1  \r)  -1 - 10c \sqrt{\f{\log(64m/\delta)}n} \r]\\ 
    &\qquad \overset{(ii)}\geq\f n4 \l[ 2c_1   -  \f{ |\calnt 0|}n - 10c \sqrt{\f{\log(64m/\delta)}n} \r]\\ 
    &\qquad \overset{(iii)} \geq\f n4 \cdot c_1.\numberthis 
    \label{eq:nmu1.minus.nminusmu1}
\end{align*}
In $(i)$, we have used~\eqref{eq:p0.lowerbound.nj+}.  Inequality $(ii)$ follows by a direct calculation.  Finally, $(iii)$ uses that Assumption~\ref{a:samples} ensures $n \geq 4 \cdot 100c^2 c_1^{-2} \log(64m/\delta)$, as well as Lemma~\ref{lemma:data.initialization}(c) and Assumption~\ref{a:noisy.fraction}.

This shows that 
there exists a universal constant $C_2>1$ such that whenever event $\calE$ occurs, for all $\mu \in \{\pm \mu_1,\pm \mu_2\}$ and  \begin{align}\text{ for all }j\text{ such that } \sip{\wt 0_j/\snorm{\wt 0_j}}{\mu}\geq 1/(3C_0 \sqrt d),\quad & N_{\mu}^{(0)}(j) - N_{-\mu}^{(0)}(j) \geq \frac{n}{C_2}.\label{eq:Nj0.result}\end{align}

\paragraph{Putting things together.}Recall that above we argued that the event $\calE$ (which is when the events in Lemma~\ref{lemma:data.initialization} and Equations~\eqref{eq:Icleanmu1.lb.alltimes} and~\eqref{eq:Nj.lb.intermediate.alltimes} all hold simultaneously) occurs with probability at least $1-2\delta$, and since this implies the claim in Equation~\eqref{eq:Nj0.result} holds, this completes the proof. 
\end{proof}

\subsection{Sufficient Conditions for a Large Margin Classifier via a Good Subnetwork}\label{sec:good.subnetwork.suffices}
In this subsection, we prove Lemmas~\ref{lemma:good.subnetwork.suffices} and~\ref{lemma:subnetwork.margin.time.t1}, which demonstrate that in order to show the neural network correctly classifies all clean samples, it suffices to show that there exists a large subnetwork that classifies the points correctly.  Before we prove this, we introduce the following auxiliary lemma, which bounds the growth of the weights of the network over time.  This lemma will be used in a number of places in the remaining proofs.

\subsubsection{Auxiliary Lemma on Neuron Weight Growth}
\begin{restatable}{lem}{perneuronnorm}\label{lemma:per.neuron.norm}
For $C>1$ large enough, on a good run
we have the following bound on the norms of the weights for times $t\geq 1$:
\begin{enumerate}
    \item For all  $j\in [m]$, $\snorm{\wt t_j} \leq 2|a_j| \alpha  t$;
    \item $\snorm{\Wt t}_F\leq 2 \alpha t$.
\end{enumerate}
\end{restatable}


\begin{proof}
First, note that since a good run occurs, 
Lemma~\ref{lemma:data.initialization} and Assumption~\ref{a:sigma} imply that for any sample $i\in [n]$, we have $\snorm{x_i - \mu_s}^2 \leq C_1 \sigma^2 d + \f{1}{C_1} \leq 1/3$, where $\mu_s$ is the cluster mean corresponding to $x_i$.  Therefore, we have for any $i\in [n]$,
\begin{equation}\label{eq:xi.norm.bound}
    \snorm{x_i} \leq (1 + 1/\sqrt 3) \leq \sqrt 2 .
\end{equation}
We can thus bound,
\begin{align*}
 \snorm{\wt t_j - \wt 0_j} &\leq \alpha \summm \tau 0 {t-1} \snorm{\nabla_j \hlt \tau (\Wt \tau)} \\
     &= \alpha \summm \tau 0 {t-1} \norm{\f 1 n \summ i n -\ell'_{i,\tau } \yit i\tau a_j \phi'(\sip{\wt \tau_j}{\xit i\tau}) \xit i\tau} \\
     &\leq \alpha \summm \tau 0 {t-1} \f 1 n \summ i n |\ell'_{i,\tau }| |a_j| \phi'(\sip{\wt \tau_j}{\xit i\tau}) \snorm{\xit i\tau} \\
     &\leq \alpha t |a_j| \sqrt{2},\numberthis \label{eq:wtj-w0j.norm}
\end{align*}
where the final inequality uses~\eqref{eq:xi.norm.bound} and $|\ell'_{i,\tau}|\le 1$. Therefore, by the triangle inequality and Lemma~\ref{lemma:random.init},
\begin{align*}
    \snorm{\wt t_j}&\leq \snorm{\wt 0_j} + \sqrt{2} |a_j|\alpha t \\
    &\leq \f 32 \sinit \sqrt{d} +  \sqrt{2} |a_j| \alpha t\\
    &\leq 2 |a_j| \alpha t,
\end{align*}
where the final inequality uses Assumptions~\ref{a:sinit} and~ \ref{a:stepsize} so that $\sinit \sqrt{md}\leq \alpha/3$ for $C>1$ sufficiently large.  The bound on the Frobenius norm follows by noting that $\snorm{\Wt t}_F^2 = \summ j m \snorm{\wt t_j}^2$ and that $|a_j| = 1/\sqrt{m}$. 
\end{proof}

\subsubsection{Proof of Lemma~\ref{lemma:good.subnetwork.suffices}}
With the above lemma in hand, we now restate and prove Lemma~\ref{lemma:good.subnetwork.suffices}. 
\goodsubnetworksuffices*

\begin{proof}
By definition,
\[ f(x;W) = f^J(x;W) + f^{J^c}(x;W) = \sum_{j\in J} a_j \phi(\sip{w_j}{x}) + \sum_{j\in J^c} a_j \phi(\sip{w_j}{x}).\]
For the latter term, note that
\begin{align*}
    |f^{J^c}(x;W)| &= \left|\sum_{j\in J^c} a_j \phi(\sip{w_j}{x})\right| \\
    &\overset{(i)}\leq \sqrt{\sum_{j\in J^c} a_j^2} \sqrt{\sum_{j\in J^c} \sip{w_j}{x}^2} \\
    &= \sqrt{\f{|J^c|}{m}} \snorm{W_{J^c}x}_2 \\
    &\leq \sqrt{\f{|J^c|}{m}} \snorm{W_{J^c}}_2 \snorm{x}\\
    &\leq \sqrt{\f{|J^c|}{m}} \snorm{W_{J^c}}_F \snorm{x}.
\end{align*}
In $(i)$ we use the Cauchy--Schwarz inequality, and that $\phi$ is 1-Lipschitz with $\phi(0)=0$.  The final claim follows as $\snorm{W_{J^c}}_F\leq \snorm{W}_F\leq 1$, so that
\[ f(x; W) \geq f^J(x; W) - \sqrt{ \f{|J^c|}m} \snorm{W_{J^c}}_F \snorm{x} \geq \f 1 {C_f} - \f 1{4 C_f} \cdot 1 \cdot 2 = \f 1 {2 C_f}.\]
\end{proof}

\subsubsection{Proof of Lemma~\ref{lemma:subnetwork.margin.time.t1}}
In this section we restate and prove Lemma~\ref{lemma:subnetwork.margin.time.t1}. 

\subnetworkmargin*

\begin{proof}
By Lemma~\ref{lemma:per.neuron.norm}, we have that for all $\tau  \in \{1,\ldots, T\}$
\begin{equation*}
\snorm{\Wt {\tau}}_F \leq 2 \alpha \tau \leq 1,
\end{equation*}
since $\tau \le T=1/(4\alpha)+1$.
This shows the claimed guarantee for the norm.   

We now show the claim for the margin.  First, note that we have for any $x\in \R^d$ and $W\in \R^{m\times d}$, since $\phi$ is 1-Lipschitz, Cauchy--Schwarz gives
\begin{align}\label{eq:nn.output.bound.spectral.norm}
    |f(x; W)| &= \l| \summ j m a_j \phi(\sip{w_j}{x})\r| \leq \sqrt{\summ j m a_j^2} \sqrt{\summ j m \sip{w_j}{x}^2} = \snorm{Wx} \leq \snorm{W}_2\snorm{x}.
\end{align}
Using Lemma~\ref{lemma:data.initialization}(b), we can therefore bound the neural network output at time $\tau$ by
\begin{equation*}
|f(\xit i\tau; \Wt \tau)| \leq \snorm{\Wt \tau}_F \snorm{x_i} \leq 2,\quad \text{for all } i\in [n],\ \tau \leq T-1.
\end{equation*}
Note that $-\ell'(z)$ is a decreasing function and also that $-\ell'(z) \geq \nicefrac 12 \exp(-z)$ on $z \geq 0$. Therefore,
\begin{equation}\label{eq:loss.lb}
 -\ell'(\yit i\tau  f(\xit i\tau ;\Wt \tau)) \geq \frac 12 \exp(-2),\quad \text{for all } i\in [n],\ \tau \leq T-1.
\end{equation}

We will now show that for sufficiently large $t$, the network produces a positive margin on the $+\mu_1$. This shall be crucial in showing that the network produces a positive margin on the clean points associated with this cluster, and a negative margin on the noisy points in the cluster. 

Recall the notation $-\ell'_{i,t} = -\ell'( y_i f(x_i; \Wt t))$.
Since neuron alignment holds, we have for $j\in J_{+\mu_1}$ and $\tau \leq T-1$,
\begin{align*}
   \sip{\wt {\tau+1}_j - \wt \tau_j}{\mu_1} &=\frac{\alpha |a_j|}{n}\sum_{i=1}^n -\ell'_{i,\tau} \yit i\tau \phi'(\langle w_j^{(\tau)}, \xit i\tau \rangle) \sip{\xit i\tau }{\mu_1}\\
        & = \frac{\alpha |a_j|}{n}\sum_{i\in \imct {+\mu_1}{\tau}} -\ell'_{i,\tau} \sip{ \xit i\tau }{\mu_1}  - \frac{\alpha |a_j|}{n}\sum_{i\in \imnt {+\mu_1}{\tau }} -\ell'_{i,\tau} \sip{ \xit i\tau }{\mu_1} \\
    &\qquad -\frac{\alpha |a_j|}{n}\sum_{i\in \imt {+\mu_2}{\tau}} -\ell'_{i,\tau} \yit i\tau \phi'(\langle w_j^{(\tau )}, \xit i\tau\rangle) \sip{\xit i\tau}{\mu_1} \\
    &\qquad -\frac{\alpha |a_j|}{n}\sum_{i\in \imt {-\mu_2}{\tau}} -\ell'_{i,\tau } \yit i\tau \phi'(\langle w_j^{(\tau )},\xit i\tau \rangle) \sip{\xit i\tau}{\mu_1}\\
    &\overset{(i)}\geq \frac{\alpha |a_j|}{n}\sum_{i\in \imct {+\mu_1}{\tau}} -\ell'_{i,\tau} \cdot \f 12  - \frac{\alpha |a_j|}{n}\sum_{i\in \imnt {+\mu_1}{\tau }} -\ell'_{i,\tau} \cdot \f 32  \\
    &\qquad -\frac{\alpha |a_j|}{n}\sum_{i\in \imt {+\mu_2}{\tau}} -\ell'_{i,\tau} \cdot C_1 \sigma \sqrt d -\frac{\alpha |a_j|}{n}\sum_{i\in \imt {-\mu_2}{\tau}} -\ell'_{i,\tau } \cdot C_1 \sigma \sqrt d\\
    &\overset{(ii)}\geq \alpha |a_j| \cdot \l[ \f{ |I_{+\mu_1}^\calC|}{n} \cdot \f{\exp(-2)}{4} - \f{ |I_{+\mu_1}^\calN|}{n} \cdot \f 3 2 - \f{ |I_{\pm \mu_2}|}{n} \cdot C_1 \sigma \sqrt d \r]\\
    &\overset{(iii)}\geq \alpha |a_j| \cdot \l[ \f{\exp(-2)}{32} - \f 32 \cdot \f{ |\calN|}{n}  - \f{ C_1}{C} \r]\\
    &\overset{(iv)}\geq \alpha |a_j| \l[ \f{ \exp(-2)}{32} -  \f 3 2 \l( \f 1 C + C_1\sqrt{\f{1}{C}} \r) - \f{ C_1}{C} \r] \\
    &\overset{(v)}\geq  \f{\exp(-2)\alpha |a_j|}{64} .\numberthis \label{eq:margin.increment.node.j}
\end{align*}
In $(i)$ we use Lemma~\ref{lemma:data.initialization}: for the sums over $\imct {+\mu_1}{\tau}$ and $\imnt{+\mu_1}{\tau}$, part (b) of the lemma and the assumption on $\sigma$ given in Assumption~\ref{a:sigma} imply that $\snorm{x_i-\mu_1}\leq 1/2$ for $C$ large enough and hence $\sip{x_i}{\mu_1} = \sip{x_i-\mu_1}{\mu_1}+1 \in [1/2, 3/2]$ for $i\in I_{+\mu_1}$.  For the sums over $i\in I_{\pm \mu_2}$, part (a) of Lemma~\ref{lemma:data.initialization} implies $|\sip{x_i}{\mu_1}|\leq C_1 \sigma \sqrt d$ and using this with $|y_i \phi'| \leq 1$ provides the desired bound.  
For inequality $(ii)$, we use~\eqref{eq:loss.lb} and that $\ell$ is 1-Lipschitz. 
In inequality $(iii)$, we use the lower bound $|\imct {+\mu_1}{\tau}| \geq n/8$ given in Eq.~\eqref{eq:Icleanmu1.lb.alltimes}, as well as $|\imt {\pm \mu_2}{\tau}|\leq n$ and the upper bound for $\sigma$ given in Assumption~\ref{a:sigma}.  For the inequality $(iv)$,  we use Lemma~\ref{lemma:data.initialization}(c) and the assumptions on the noise rate and number of samples given in  Assumptions~\ref{a:noisy.fraction} and~\ref{a:samples} to bound $|\calN|/n \leq \eta + C_1 \sqrt{\log(1/\delta)/n} \leq 1/C + C_1\sqrt{1/C}$.  Then $(v)$ follows by taking $C$ to be a large enough universal constant. 

Summing~\eqref{eq:margin.increment.node.j} from $\tau = 1, \dots, T-1$ and using that $j\in J_{+\mu_1}$ implies $\sip{\wt 1_j}{\mu_1}>0$, we get that
\begin{equation} \label{eq:margin.time.t.node.j}
\sip{\wt {T}_j}{\mu_1} \geq\sip{\wt {T}_j - \wt 1_j}{\mu_1} \geq \f{ \exp(-2)\alpha |a_j| (T-1) }{64} ,\quad \text{for all } j\in J_{+\mu_1}.
\end{equation}
Thus, we have the following lower bound on the network output at $\mu_1$:
\begin{align*}
    f^J(\mu_1; \Wt {T}) &= \sum_{j\in J_{+\mu_1}} a_j \phi(\sip{\wt  {T}_j}{\mu_1}) +  \sum_{j\in J_{-\mu_1}} a_j \phi(\sip{\wt  {T}_j}{\mu_1}) +  \sum_{j\in J_{\pm \mu_2}} a_j \phi(\sip{\wt  {T}_j}{\mu_1})\\
     &\overset{(i)}= \sum_{j\in J_{+\mu_1}} a_j \sip{\wt  {T}_j}{\mu_1} +  \sum_{j\in J_{\pm \mu_2}} a_j \phi(\sip{\wt  {T}_j}{\mu_1})\\
     &\overset{(ii)}\geq \sum_{j\in J_{+\mu_1}} a_j \sip{\wt  {T}_j}{\mu_1} -  \sum_{j\in J_{\pm \mu_2}} |a_j \sip{\wt  {T}_j}{\mu_1}|\\
     &\overset{(iii)}\geq \sum_{j\in J_{+\mu_1}} a_j \sip{\wt  {T}_j}{\mu_1} -  \frac{3 \alpha |J_{\pm \mu_2}|}{m} \\
     &\overset{(iv)}{\ge} \alpha \l[ \f{ |J_{+\mu_1}| (T-1)\exp(-2)}{64m} - \f{ 3  |J_{\pm \mu_2}|}{m} \r] \\
     &\overset{(v)}\geq \alpha \l[ \f{ (T-1) \exp(-2) }{256 }(1-1/C_0)^2 - 3  \r].
\end{align*}
In $(i)$ we use the neuron alignment condition.  In $(ii)$ we use that $\phi$ is 1-Lipschitz. In $(iii)$ we use the almost-orthogonality (Condition~\ref{condition:almost.orthogonal}) and that $|a_j| = 1/\sqrt{m}$. In~$(iv)$ we use Eq.~\eqref{eq:margin.time.t.node.j} and again use the fact that $|a_j| = 1/\sqrt{m}$. Finally, $(v)$ uses Lemma~\ref{lemma:candidate.subnetwork}, so that we have $|J_{+\mu_1}|/ m \geq \f 14 (1-1/C_0)^2$, as well as the fact that $|J_{\pm \mu_2}| \le m$. 
In particular, we see that for $T-1= 1 /(4\alpha)$, we have
\begin{equation} \label{eq:subnetwork.margin.lb.intermediate}
f^J(\mu_1; \Wt {T}) \geq \f{\exp(-2)}{1024 }(1 - 1/C_0)^2 - 3\alpha   \geq \f{\exp(-2)}{2048 }(1-1/C_0)^2.
\end{equation}
In the last inequality, we use the Assumption~\ref{a:stepsize} and take $C>1$ large enough so that $\alpha \leq \exp(-2)(1-1/C_0)^2/(6 \cdot 1024)$.  With a lower bound on the margin for the cluster center $\mu_1$ established, we can translate this result to one for samples using Lemma~\ref{lemma:data.initialization}.  To do so, note that the sub-network $f^J(\cdot; W)$ is $\snorm{W}_F$-Lipschitz in the network input, i.e., we have
\begin{align*} 
|f^J(x; W) - f^J(x'; W)| &= \l| \sum_{j\in J} a_j [\sigma(\sip {w_j}x) - \sigma(\sip {w_j}{x'}) ]\r| \\
&\leq \snorm{a} \sqrt{\summ j m \sip{w_j}{x-x'}^2} \\
&\leq \snorm{W}_F \snorm{x-x'},
\end{align*}
where the first inequality follows by Cauchy--Schwarz inequality and the last inequality follows since $\lv a \rv = \sum_{j=1}^m a_j^2 = 1$ and $\snorm{W(x-x')}\leq \snorm{W}_F\snorm{x-x'}$.  
Therefore we can use Lemma~\ref{lemma:data.initialization}~(b) to translate~\eqref{eq:subnetwork.margin.lb.intermediate} into a guarantee for the samples.  For any $i\in \imct{+\mu_1}{t}$, so that $\yit i{t}=+1$,
\begin{align*} 
\yit i{t}  f^J(\xit i{t}; \Wt {T}) &\geq \yit i{t} f^J(\mu_1; \Wt {T}) - \snorm{\Wt {T}}_F\max_i\snorm{\xit i{t}-\mu_1}  \\
&\geq \f{ \exp(-2)}{2048}(1-1/C_0)^2 - C_1 \sigma \sqrt d \\
&\geq \f{\exp(-2)}{4096}(1-1/C_0)^2.
\end{align*} 
The second inequality uses that $\snorm{\Wt {T}}_F\leq 1$ and Lemma~\ref{lemma:data.initialization}, while the last inequality uses Assumption~\ref{a:sigma} so that $C_1 \sigma \sqrt d$ can be taken smaller than any absolute constant for $C>1$ sufficiently large. 

This completes the proof for samples $i\in \imct {+\mu_1}{t}$.  To see that the network also incorrectly classifies noisy samples, take $i\in \imnt{+\mu_1}{t}$, so that $\yit i{t}=-1$.  Then, again using Lemma~\ref{lemma:data.initialization}(b),
\begin{align*} 
\yit i{T}  f^J(\xit i{t} \Wt {T}) &= -f^J(\xit i{t} \Wt {T}) \\
&\leq -f^J(\mu_1; \Wt {T}) + \snorm{\Wt {T}}_F\max_i\snorm{\xit i{t}-\mu_1}  \\
&\leq -\f{\exp(-2)}{4096}(1-1/C_0)^2,
\end{align*}
where the last inequality follows since $\lv W^{(T)}_F\rv\le 1$ as we proved above.

For the other clusters, an identical argument to~\eqref{eq:margin.increment.node.j} yields
\begin{align*}
    \sip{\wt {\tau+1}_j - \wt \tau_j}{-\mu_1} &\geq \f{ \alpha |a_j|}{64} \exp(-2),\quad \text{for all } j\in J_{-\mu_1}, \tau \leq T-1,\\
    \sip{\wt {\tau+1}_j - \wt \tau_j}{\mu_2} &\geq \f{ \alpha |a_j|}{64} \exp(-2),\quad \text{for all } j\in J_{+\mu_2}, \tau \leq T-1,\\
    \sip{\wt {\tau+1}_j - \wt \tau_j}{-\mu_2} &\geq \f{ \alpha |a_j|}{64} \exp(-2),\quad \text{for all } j\in J_{-\mu_2}, \tau \leq T-1.\numberthis \label{eq:margin.time.t.other.nodes}
\end{align*}
We can utilize the identities~\eqref{eq:margin.time.t.other.nodes} and similar arguments to show that the desired margin condition holds for other clusters $\imct {-\mu_1}{t}, \imct {\pm \mu_2}{t}$ so the result holds for all $i\in \calct t$. 
\end{proof} 

\subsection{Gradient Descent Produces a Large Margin Classifier}\label{sec:gd.produces.good.subnetwork}
In this section, we show that the sufficient conditions necessary for producing a good subnetwork described in Lemma~\ref{lemma:subnetwork.margin.time.t1} hold.  The first step for this is to show that neuron alignment holds at time $t=1$.
\subsubsection{Proof of Lemma~\ref{lemma:nac.holds.time.1}}
We restate and prove Lemma~\ref{lemma:nac.holds.time.1} below.
\nacholds*
\begin{proof}
Since a good run occurs, all of the events in Lemma~\ref{lemma:random.init}, Lemma~\ref{lemma:candidate.subnetwork},  Lemma~\ref{lemma:data.initialization}, and Lemma~\ref{lemma:Nj+.argument.all.times} hold. 
Recall that the sets $J_{\pm \mu_1}$ and $J_{\pm \mu_2}$ were defined in Lemma~\ref{lemma:candidate.subnetwork}.   
We will now show that Condition~\ref{condition:alignment.condition} holds for these sets at time $t=1$. We will demonstrate the first claim in the condition statement (regarding $\mu_1$), that is,
for all $j \in J_{+\mu_1}$:
\begin{align*}
        \phi'(\sip{\wt {1}_j} {\xit k1}) &= 1\quad \text{for all } k\in \imt {+\mu_1}{0},\\ \phi'(\sip{\wt {1}_j} {\xit k1})& = 0\quad  \text{for all } k\in \imt {-\mu_1}{0}.
\end{align*}
The remaining parts of the neuron alignment condition concerning $j\in J_{-\mu_1}\cup J_{\pm \mu_2}$ shall follow by using an identical argument.  

There are two parts to the neuron alignment condition, let us begin by proving that the first part holds.

\paragraph{Part 1 of NAC:} Let us begin by showing that for all $j \in J_{+\mu_1}$:
\begin{align}\label{eq:t1.J1+.identity.noisy}
    \phi'(\sip{\wt {1}_j} {\xit k1})= 1\quad \text{for all } k\in \imt {+\mu_1}{0}.
\end{align}
Recall that by the definition of the set $J_{+\mu_1}$, we have that for all $j \in J_{+\mu_1}$,
\begin{align*}
    \phi'(\langle w_{j}^{(0)},\mu_1\rangle)=1.
\end{align*}
To show that the first part of NAC holds for the subset $J_{+\mu_1}$, we need to show that a step of gradient descent takes ensures that all of the samples from this cluster are captured by the neurons in $J_{+\mu_1}$. We shall prove this in stages. 
\begin{enumerate}
    \item First, we shall establish a relation between the parameters after one the first step of gradient descent $w_j^{(1)}$ and those at initialization $w_{j}^{(0)}$. 
    \item Then, we shall leverage this relation  to show that the angle between $w_j^{(1)}$ and $\mu_1$ is small.
    \item This, along with the fact that the samples from this cluster are close to its center, shall be sufficient to ensure that \eqref{eq:t1.J1+.identity.noisy} is satisfied.
\end{enumerate}
 
\paragraph{Step 1:} First, recall that by the calculation~\eqref{eq:nn.output.bound.spectral.norm}, we have $|f(x_i; \Wt 0)| \leq \snorm{\Wt 0}_F \snorm{x_i}$.  Thus the bound $\lv w_{j}^{(0)}\rv \le \frac{3}{2}\sinit \sqrt{d}$ by Lemma~\ref{lemma:random.init}, the bound $\lv x_i\rv \le \sqrt{2}$ from~\eqref{eq:xi.norm.bound} imply that 
\[ |f(x_i; \Wt 0)| \leq \snorm{\Wt 0}_F \snorm{x_i} \leq 3 \sinit \sqrt{md} .\]
Note that $z \mapsto -\ell'(z)$ is a decreasing function, and thus
\begin{align}\label{e:all_loss_concentrates.noisy}
     -\ell'_{i,0} & \in \left[-\ell'\left(3\sinit \sqrt{md} \right),-\ell'\left(-3\sinit \sqrt{md}\right)\right].
\end{align}
With this in place, let us analyze the gradient update for a neuron in the set $J_{+\mu_1}$. Recall that for such nodes, $a_j=1/\sqrt{m}>0$ and therefore,
\begin{align*}
    &w_{j}^{(1)}\\ & = w_{j}^{(0)}+\frac{\alpha a_j}{n}\sum_{i=1}^n -\ell'_{i,0}y_ix_i \phi'(\langle w_j^{(0)},x_i\rangle) \\
        & = w_{j}^{(0)}+\frac{\alpha a_j}{n}\sum_{i\in \imt {+\mu_1}{0}} -\ell'_{i,0} y_i \phi'(\langle w_j^{(0)},x_i\rangle) \mu_{1} +\frac{\alpha a_j}{n}\sum_{i\in \imt {-\mu_1}{0}} -\ell'_{i,0} y_i\phi'(\langle w_j^{(0)},x_i\rangle)(-\mu_{1}) \\
    &\qquad +\frac{\alpha a_j}{n}\sum_{i\in \imt {+\mu_2}{0}} -\ell'_{i,0}y_i \phi'(\langle w_j^{(0)},x_i\rangle) \mu_2 +\frac{\alpha a_j}{n}\sum_{i\in \imt {-\mu_2}{0}} -\ell'_{i,0}y_i \phi'(\langle w_j^{(0)},x_i\rangle) (-\mu_2)\\
    &\qquad +\frac{\alpha a_j}{n}\sum_{i\in \imt {+\mu_1}{0}} -\ell'_{i,0}y_i \phi'(\langle w_j^{(0)},x_i\rangle) (x_i-\mu_{1}) +\frac{\alpha a_j}{n}\sum_{i\in \imt {-\mu_1}{0}} -\ell'_{i,0}y_i \phi'(\langle w_j^{(0)},x_i\rangle) (x_i-(-\mu_1))\\
    &\qquad +\frac{\alpha a_j}{n}\sum_{i\in \imt {+\mu_2}{0}} -\ell'_{i,0} y_i\phi'(\langle w_j^{(0)},x_i\rangle) (x_i-\mu_2) +\frac{\alpha a_j}{n}\sum_{i\in \imt {-\mu_2}{0}} -\ell'_{i,0} y_i\phi'(\langle w_j^{(0)},x_i\rangle)(x_i-(-\mu_2)).
\end{align*}
Define the first ``error vector''
\begin{align*}
    \zeta_1 &:=\frac{\alpha a_j}{n}\sum_{i\in \imt {+\mu_1}{0}} -\ell'_{i,0}y_i \phi'(\langle w_j^{(0)},x_i\rangle) (x_i-\mu_{1}) +\frac{\alpha a_j}{n}\sum_{i\in \imt {-\mu_1}{0}} -\ell'_{i,0} y_i\phi'(\langle w_j^{(0)},x_i\rangle) (x_i-(-\mu_1))\\
    &\qquad +\frac{\alpha a_j}{n}\sum_{i\in \imt {+\mu_2}{0}} -\ell'_{i,0}y_i \phi'(\langle w_j^{(0)},x_i\rangle) (x_i-\mu_2) +\frac{\alpha a_j}{n}\sum_{i\in \imt {-\mu_2}{0}} -\ell'_{i,0} y_i \phi'(\langle w_j^{(0)},x_i\rangle)(x_i-(-\mu_2)). 
\end{align*}
By Lemma~\ref{lemma:data.initialization}(b) that provides a bound on the deviation of  $x_i$ from its cluster center, and using that $|\ell'(t)|,|\phi'(t)|\leq 1$, we have that
\begin{align}\label{e:zeta_1_bound_new.noisy}
    \snorm{\zeta_1} &\leq C_1 \alpha a_j \sigma  \sqrt d.
\end{align}
    
Continuing from above, we get,
\begin{align*}
   & w_{j}^{(1)} - w_{j}^{(0)} \\ &\qquad  = \frac{\alpha a_j}{n}\sum_{i\in \imt {+\mu_1}{0}} -\ell'_{i,0} y_i \phi'(\langle w_j^{(0)},x_i\rangle) \mu_{1} -\frac{\alpha a_j}{n}\sum_{i\in \imt {-\mu_1}{0}} -\ell'_{i,0} y_i \phi'(\langle w_j^{(0)},x_i\rangle) \mu_1\\
    &\quad \qquad +\frac{\alpha a_j}{n}\sum_{i\in \imt {+\mu_2}{0}} -\ell'_{i,0} y_i\phi'(\langle w_j^{(0)},x_i\rangle) \mu_2 - \frac{\alpha a_j}{n}\sum_{i\in \imt {-\mu_2}{0}} -\ell'_{i,0} y_i \phi'(\langle w_j^{(0)},x_i\rangle)\mu_2 +\zeta_1\\
    &\qquad = \frac{\alpha a_j}{n}\sum_{i\in \imt {+\mu_1}{0}}-\ell'(0)y_i \phi'(\langle w_j^{(0)},x_i\rangle) \mu_{1} -\frac{\alpha a_j}{n}\sum_{i\in \imt {-\mu_1}{0}}-\ell'(0) y_i \phi'(\langle w_j^{(0)},x_i\rangle) \mu_1\\
    &\quad \qquad + \frac{\alpha a_j}{n}\sum_{i\in \imt {+\mu_2}{0}} -\ell'(0)y_i\phi'(\langle w_j^{(0)},x_i\rangle) \mu_2  -\frac{\alpha a_j}{n}\sum_{i\in \imt {-\mu_2}{0}} -\ell'(0) y_i \phi'(\langle w_j^{(0)},x_i\rangle)\mu_2 +\zeta_1 + \zeta_2,\numberthis \label{eq:wj1.wj0.difference.z1.z2}
\end{align*}
where we have defined the second ``error vector'' $\zeta_2$ as,
\begin{align*}
    \zeta_2 &:= \frac{\alpha a_j}{n}\sum_{i\in \imt {+\mu_1}{0}} (-\ell'_{i,0}+\ell'(0)) \phi'(\langle w_j^{(0)},x_i\rangle)y_i \mu_{1} -\frac{\alpha a_j}{n}\sum_{i\in \imt {-\mu_1}{0}} (-\ell'_{i,0}+\ell'(0))y_i \phi'(\langle w_j^{(0)},x_i\rangle)\mu_1 \\
    &\qquad -\frac{\alpha a_j}{n}\sum_{i\in \imt {+\mu_2}{0}} (-\ell'_{i,0}+\ell'(0))y_i \phi'(\langle w_j^{(0)},x_i\rangle) \mu_2 +\frac{\alpha a_j}{n}\sum_{i\in \imt {-\mu_2}{0}} (-\ell'_{i,0}+\ell'(0))y_i \phi'(\langle w_j^{(0)},x_i\rangle)\mu_2.
\end{align*}
Applying the triangle inequality and Equation~\eqref{e:all_loss_concentrates.noisy},
\begin{align}
    \nonumber \|\zeta_2 \|& \nonumber\le \alpha a_j\max\{\|\mu_1\|,\|\mu_2\|\} \max\left\{\left(-\ell'\left(-3\sinit \sqrt{md}\right)+\ell'(0)\right),\left(-\ell'\left(3\sinit \sqrt{md} \right)+\ell'(0)\right)\right\} \\
    &\leq \alpha a_j \sinit \sqrt{md},\numberthis \label{e:zeta_2_bound_new.noisy}
\end{align}
where in the last line we have used that $-\ell'$ is $\nicefrac 14$-Lipschitz and that $\lv \mu_1 \rv = \lv \mu_2\rv=1$.  
Define now, for $j\subset [m]$,
\begin{equation} 
\begin{cases} 
&N_{+\mu_1}(j)= \sum_{i\in \imt {+\mu_1}{0}} y_i \phi'(\sip {w_j^{(0)}}{x_i}),\\
&N_{-\mu_1}(j)= \sum_{i\in \imt {-\mu_1}{0}} y_i \phi'(\sip{w_j^{(0)}}{x_i}),\\
&N_{+\mu_2}(j)= \sum_{i\in \imt {+\mu_2}{0}} y_i \phi'(\sip{w_j^{(0)}}{x_i}),\\
&N_{-\mu_2}(j)= \sum_{i\in \imt {-\mu_2}{0}} y_i \phi'(\sip{w_j^{(0)}}{x_i}).\\
\end{cases}\end{equation}
Substituting the above definition into~\eqref{eq:wj1.wj0.difference.z1.z2}, we then have
\begin{align}
    \nonumber &w_{j}^{(1)} - w_{j}^{(0)}\\  \nonumber &\qquad  = \frac{\alpha a_j}{n}\left[ -\ell'(0)\mu_1(N_{+\mu_1}(j)-N_{-\mu_1}(j))-\ell'(0)\mu_2(N_{+\mu_2}(j)-N_{-\mu_2}(j))\right]+\zeta_1 +\zeta_2\\
    &\qquad  = \frac{\alpha a_j}{2n}\left[ \mu_1(N_{+\mu_1}(j)-N_{-\mu_1}(j))+\mu_2(N_{+\mu_2}(j)-N_{-\mu_2}(j))\right]+\zeta_1 +\zeta_2,
    \label{eq:wj1.vs.wj0.identity}
\end{align}
where the last equality follows since $-\ell'(0)=1/2$.

\paragraph{Step 2:} Continuing with the plan outlined above, we will now show that $\sip{\wt 1_j / \snorm{\wt 1_j}}{\mu_1}\geq c$ for a universal constant $c$. 
We have, 
\begin{align*}
    \langle w_j^{(1)} - \wt 0_j,\mu_1\rangle &= \frac{\alpha a_j}{2n}\left[ \lVert \mu_1 \rVert^2 (N_{+\mu_1}(j)-N_{-\mu_1}(j))+\langle \mu_2,\mu_1\rangle(N_{+\mu_2}(j)-N_{-\mu_2}(j))\right]\\
    & \qquad    +\sip{\zeta_1}{\mu_1} +\sip{\zeta_2}{\mu_1} \\
     &\geq \frac{\alpha a_j}{2n}\left[  N_{+\mu_1}(j)-N_{-\mu_1}(j)\right]  - \frac{\alpha a_j}{n}\l[C_1 n \sigma \sqrt d + n \sinit \sqrt{md} \r]. \numberthis \label{e:margin_lower_bound_step_one_new.noisy}
\end{align*}
In the last line we have applied the inequalities~\eqref{e:zeta_1_bound_new.noisy} and \eqref{e:zeta_2_bound_new.noisy}. 
Thus, it suffices to derive a lower bound for $N_{+\mu_1}(j)-N_{-\mu_1}(j)$, which is precisely the result that Lemma~\ref{lemma:Nj+.argument.all.times} provides.   We have,
\begin{align}\nonumber 
   & N_{+\mu_1}(j) - N_{-\mu_1}(j)\\ &\qquad = \sum_{i\in \imt {+\mu_1}{0}} y_i \phi'(\sip{\wt 0_j}{x_i}) - \sum_{i\in \imt {-\mu_1}{0}} \phi'(\sip{\wt 0_j}{x_i}) \\ \nonumber
    &\qquad = \sum_{i\in \imct {+\mu_1}{0}} \phi'(\sip{\wt 0_j}{x_i}) - \sum_{i\in \imnt {+\mu_1}{0}} \phi'(\sip{\wt 0_j}{x_i})- \sum_{i\in \imct {-\mu_1}{0}} \phi'(\sip{\wt 0_j}{x_i}) +\sum_{i\in \imnt {-\mu_1}{0}} \phi'(\sip{\wt 0_j}{x_i}) \\ \nonumber 
    &\qquad \geq \sum_{i\in \imct {+\mu_1}{0}} \phi'(\sip{\wt 0_j}{x_i})- \sum_{i\in \imct {-\mu_1}{0}} \phi'(\sip{\wt 0_j}{x_i}) - |\calnt 0|\\ \nonumber 
    &\qquad \overset{(i)}\geq n \l( \f{1}{C_2} - \f{ |\calnt 0|}{n} \r) \\
    &\qquad \overset{(ii)}\geq \f{n}{2C_2}.
\end{align} 
In $(i)$ we use Lemma~\ref{lemma:Nj+.argument.all.times}, while in $(ii)$ we use Assumption~\ref{a:noisy.fraction} so that $|\calnt 0|/n \leq 2 \eta \leq 1/2C_2$.  
Thus, plugging this in to~\eqref{e:margin_lower_bound_step_one_new.noisy}  we get that
\begin{align}
    \sip{\wt 1_j}{\mu_1} &\overset{(i)}> \sip{\wt 1_j - \wt 0_j}{\mu_1} \geq \frac{\alpha a_j}{4C_2}   -\alpha a_j\l[C_1 \sigma  \sqrt{d} + \sinit \sqrt{md} \r] \overset{(ii)}\geq \alpha a_j  / 8 C_2. \label{eq:unnormalized.margin}
\end{align}
Inequality $(i)$ uses that $j\in J_{+\mu_1}$ implies $\sip{\wt 0_j}{\mu_1}>0$.  Inequality $(ii)$ follows by using Assumption~\ref{a:sigma}, so that $C_1 \sigma \sqrt{d}\leq 1/16C_2$, as well as Assumption~\ref{a:sinit} so that for $C>1$ sufficiently large we have $\sinit \sqrt{md}\leq 1/16C_2$.  
Next, we can use Lemma~\ref{lemma:per.neuron.norm} to derive a bound for the normalized margin,
\begin{equation}\label{eq:neuron.normalized.margin.t1}
    \ip{\f{ \wt 1_j }{ \snorm{\wt 1_j}} }{\mu_1} \geq \f{\alpha a_j / 8 C_2}{2 \alpha a_j} = \f{1}{16 C_2}.
\end{equation}
This completes the proof for the normalized margin claim. 

\paragraph{Step~3:}
To show that the first part of the neuron alignment holds, we want to show that $\sip{\wt 1_j}{\xit i1}>0$.  We have,
\begin{align*}
    \ip{\frac{\wt 1_j}{\snorm{\wt 1_j}}}{\xit i1} &= \ip{\frac{\wt 1_j }{ \snorm{\wt 1_j}}}{\mu_1} + \ip{\frac{\wt 1_j}{\snorm{\wt 1_j}}}{\xit i1-\mu_1}\\
    &\overset{(i)}\geq 1/16C_2  -\snorm{\xit i1 - \mu_1}  \\
    &\overset{(ii)}\geq 1/16C_2 - C_1 \sigma \sqrt d \\
    &\overset{(iii)} \geq 1/32 C_2 > 0. \numberthis \label{eq:correlation.with.cluster.mean.to.samples}
\end{align*}
Above, $(i)$ uses~\eqref{eq:neuron.normalized.margin.t1} and the Cauchy--Schwarz inequality.  Inequality $(ii)$ uses Lemma~\ref{lemma:data.initialization}.  The final inequality $(iii)$ uses Assumption~\ref{a:sigma}, so that $C_1 \sigma \sqrt d \leq 1/ 64 C_2$.   This completes the part of neuron alignment concerning neurons $J_{+\mu_1}$ and for samples in cluster $\imt {+\mu_1}{1}$.  

\paragraph{Part 2 of NAC:} To show the part of neuron alignment concerning samples in cluster $\imt {-\mu_1}{1}$, note that we still have the identity~\eqref{eq:neuron.normalized.margin.t1}.  But for samples $i\in \imt {-\mu_1}{1}$, we have
\[ 
\sip{\wt 1_j}{\xit i1} = \sip{\wt 1_j}{-\mu_1} + \sip{\wt 1_j}{\xit i1+ \mu_1},\]
where $\snorm{\xit i1+\mu_1}$ is small, and so the inequality $\sip{\wt 1_j}{\xit i1}<0$ follows using the same argument as above. Hence, we have shown that $\phi'(\sip{\wt 1_j}{\xit i1})$ for all $i \in I_{-\mu_1}$.

This completes the proof of neuron alignment for the neurons in $J_{+\mu_1}$. An analogous argument can also be used to establish the claim for the neurons in $J_{-\mu_1} \cup J_{\pm \mu_2}$.
\end{proof}

\subsubsection{Proof of Lemma~\ref{lemma:nac.orthogonality.holds.all.times}}
We now show that the neuron alignment condition and almost-orthogonality condition hold for a sufficiently large amount of time.

\nacaohold*

\begin{proof}
The proof is by induction.
To see the base case $t=1$, first, note that neuron alignment holds at time $t=1$ by Lemma~\ref{lemma:nac.holds.time.1}.  Further, almost-orthogonality holds at time $t=1$ since by Lemma~\ref{lemma:per.neuron.norm} we have $|\sip{\wt 1_j}{\mu}|\leq \snorm{\wt 1_j} \leq 2 |a_j| \alpha t$ for any $\mu\in \{\pm \mu_1, \pm \mu_2\}$.   So let us now assume that neuron alignment and almost-orthogonality hold at every time step until time $t$, and consider the case $t+1\leq 1/(4\alpha)$.  
By Lemma~\ref{lemma:per.neuron.norm}, since $t+1\leq 1/(4\alpha)$, we have $\snorm{\Wt \tau}_F \leq 1$ for every $\tau \leq t+1$.  Using an identical argument to~\eqref{eq:loss.lb}, this implies for all $i \in [n]$ and $\tau \le \left\{1,\ldots,1/(4\alpha)\right\}$,
\begin{equation}\label{eq:loss.lb.induction}
 -\ell'_{i,\tau} := -\ell'(\yit i\tau f(\xit i\tau; \Wt \tau)) \geq \frac 12 \exp(-2).
\end{equation}
This key property will allow us to show that neuron alignment holds at time $t+1$. 

\paragraph{Neuron alignment holds at time $t+1$.}  We will first show the result for neurons $j\in J_{+\mu_1}$; the result for neurons in $J_{-\mu_1}\cup J_{\pm \mu_2}$ will follow similarly.   

Let $j\in J_{+\mu_1}$, so $a_j = |a_j| = 1/\sqrt{m}$.  It suffices to show that for $k\in \imt {+\mu_1}t$, we have $\sip{\wt {t+1}_j}{\xit k{t+1}} >0$, and for $k\in \imt {-\mu_1}t$, we have $\sip{\wt {t+1}_j}{\xit k{t+1}} < 0$.   To show this, we will utilize an argument similar to that we used in the proof of Lemma~\ref{lemma:nac.holds.time.1} (see eqs.~\eqref{eq:neuron.normalized.margin.t1} and~\eqref{eq:correlation.with.cluster.mean.to.samples}), in that we will first show that $\sip{\wt {t+1}_j / \snorm{\wt {t+1}_j}}{+\mu_1}\geq c$ for some constant $c>0$, and then use that the within-cluster variance is of order $\sigma^2 d$ and that $\sigma^2 \ll 1/d$. Towards this end, we first derive a consequence of neuron alignment.  Let $\tau$ be a time satisfying $1 \leq \tau \leq t$.  Then neuron alignment holds at time $\tau$ by the induction hypothesis, so that,
\begin{align*}\nonumber 
   &\sip{ w_{j}^{(\tau+1)} - w_{j}^{(\tau)}}{+\mu_1} \\
   &=  \frac{\alpha a_j}{n}\sum_{i=1}^n -\ell'_{i,\tau} \phi'(\langle w_j^{(\tau)},\xit i\tau\rangle) \sip{\yit i\tau \xit i\tau }{\mu_1} \\ \nonumber 
    & = \frac{\alpha |a_j|}{n}\sum_{i\in \imct {+\mu_1}{\tau}} -\ell'_{i,\tau} \phi'(\langle w_j^{(\tau)},\xit i\tau\rangle) \sip{\xit i\tau}{\mu_1} - \frac{\alpha |a_j|}{n}\sum_{i\in \imnt {+\mu_1}{\tau}} -\ell'_{i,\tau} \phi'(\langle w_j^{(\tau)},\xit i\tau\rangle)\sip{\xit i\tau}{\mu_1} \\ \nonumber 
    &\qquad +  \frac{\alpha |a_j|}{n}\sum_{i\in \imct {-\mu_1}{\tau}} -\ell'_{i,\tau} \phi'(\langle w_j^{(\tau)},\xit i\tau\rangle) \sip{\xit i\tau}{\mu_1} -\frac{\alpha |a_j|}{n}\sum_{i\in \imnt {-\mu_1}{\tau}} -\ell'_{i,\tau} \phi'(\langle w_j^{(\tau)},\xit i\tau\rangle)\sip{\xit i\tau}{\mu_1} \\ \nonumber 
    &\qquad +\frac{\alpha |a_j|}{n}\sum_{i\in \imt {+\mu_2}{\tau}} -\ell'_{i,\tau} \yit i\tau \phi'(\langle w_j^{(\tau)},\xit i\tau\rangle) \sip{\xit i\tau}{\mu_1} +\frac{\alpha |a_j|}{n}\sum_{i\in \imt {-\mu_2}{\tau}} -\ell'_{i,\tau} \yit i\tau \phi'(\langle w_j^{(\tau)},\xit i\tau\rangle)\sip{\xit i\tau}{\mu_1} \\ \nonumber 
     & \overset{(i)}= \frac{\alpha |a_j|}{n}\sum_{i\in \imct {+\mu_1}{\tau}} -\ell'_{i,\tau}\sip{\xit i\tau}{\mu_1} - \frac{\alpha |a_j|}{n}\sum_{i\in \imnt {+\mu_1}{\tau}} -\ell'_{i,\tau}\sip{\xit i\tau}{\mu_1} \\
    &\qquad +\frac{\alpha |a_j|}{n}\sum_{i\in \imt {+ \mu_2}{\tau}} -\ell'_{i,\tau} \yit i\tau \phi'(\langle w_j^{(\tau)},\xit i\tau\rangle) \sip{\xit i\tau}{\mu_1} +\frac{\alpha |a_j|}{n}\sum_{i\in \imt {- \mu_2}{\tau}} -\ell'_{i,\tau}\yit i\tau \phi'(\langle w_j^{(\tau)},\xit i\tau\rangle)\sip{\xit i\tau}{\mu_1}.\numberthis \label{eq:J1+.activation.identity.noisy}
\end{align*}
In $(i)$ we have used that the neuron alignment condition holds at time $\tau$, and thus $\phi'(\sip{\wt \tau_j}{\xit i\tau})=1$ for $i\in \imt {+\mu_1}\tau$ and $\phi'(\sip{\wt \tau_j}{\xit i\tau})=0$ for $i\in \imt {-\mu_1}\tau$.  
We can bound the terms $\sip{\xit i\tau}{+\mu_1}$ appearing above with Lemma~\ref{lemma:data.initialization}, so that
\begin{align*}
    \sip{ w_{j}^{(\tau+1)} - w_{j}^{(\tau)}}{+\mu_1} &\overset{(i)}\geq \f {\alpha |a_j|}{n} \Bigg[ \sum_{i\in \imct {+\mu_1}{\tau}} -\ell'_{i,\tau} \l[1 - C_1 \sigma \sqrt{d}\r] -  2|{\calnt \tau}| -2 \sum_{i\in \imt {\pm \mu_2}{\tau}} -\ell'_{i,\tau} C_1 \sigma \sqrt{d}\Bigg]\\
     &\overset{(ii)}\geq\f {\alpha |a_j|}{n} \Bigg[ \sum_{i\in \imct {+\mu_1}{\tau} }\f 1 4 \exp(-2) -  2 |{\calnt \tau}| -2C_1 |\imct {+\mu_1}{\tau} \cup \imt {\pm \mu_2}{\tau}| \sigma \sqrt{d} \Bigg] \\
     &\overset{(iii)}\geq\f {\alpha |a_j|}{n} \Bigg[ \f n 8 \cdot \f 1 4 \exp(-2) -  2 |{\calnt \tau}| -2C_1 n \sigma \sqrt{d} \Bigg] \\
     &\overset{(iv)}\geq \f{ \alpha |a_j|\exp(-2)}{64}.\numberthis \label{eq:neuron.alignment.J+mu1}
\end{align*}
In $(i)$ we use that $|\ell'|\leq 1$ and Lemma~\ref{lemma:data.initialization}, so that $ \sip{\xit i\tau}{\mu_1} \geq 1-C_1 \sigma \sqrt d$ for $i\in \imt {+\mu_1}{\tau}$, $|\sip{x_i}{\mu_1}| \leq 2$ for $i\in \imt {-\mu_1}{\tau}$, and $|\sip{\xit i\tau}{\mu_1}|\leq C_1 \sigma \sqrt d$ for $i\in \imt {\pm \mu_2}{\tau}$.  
In inequality $(ii)$, we use~\eqref{eq:loss.lb.induction} as well as the fact that Assumption~\ref{a:sigma} implies $ C_1 \sigma \sqrt{d}\leq 1/2$.  
In $(iii)$ we use parts (c) and (d) of Lemma~\ref{lemma:data.initialization} and Assumption~\ref{a:samples} so that $|\imct {+\mu_1}{\tau}| \geq |\imt {+\mu_1}{\tau}| - |\calnt \tau| \geq n/8$.   The final line $(iv)$ follows by using Assumptions~\ref{a:sigma} and~\ref{a:noisy.fraction}, so that $2|\calnt \tau|/n \leq \exp(-2)/128$ and $2 C_1 \sigma \sqrt d \leq \exp(-2)/128$ as well.  We have thus shown that if neuron alignment holds at time $\tau$, then for $j\in J_{+\mu}$ we have $\sip{\wt {\tau+1}_j - \wt \tau_j}{+\mu_1}\geq \alpha |a_j|\exp(-2)/64$.  Telescoping this inequality from times $\tau=1, \dots, t$, we get
\[ \sip{\wt {t+1}_j}{+\mu_1} \geq \sip{\wt 1_j}{+\mu_1} + \f{ \alpha |a_j| t \exp(-2)}{64} \overset{(i)}\geq \f{ \alpha |a_j| t \exp(-2)}{64},\]
where inequality $(i)$ uses Lemma~\ref{lemma:nac.holds.time.1}.  By Lemma~\ref{lemma:per.neuron.norm}, we have $\snorm{\wt {t+1}_j}\leq 2 \alpha |a_j|(t+1)$, so that,
\begin{equation}
    \ip{\f{\wt {t+1}_j}{\snorm{\wt {t+1}_j}}}{+\mu_1} \geq \f{ \alpha |a_j| t \exp(-2)}{128 \alpha |a_j| (t+1)} \geq \f{\exp(-2)}{256}.
\end{equation}
Using an identical argument to~\eqref{eq:correlation.with.cluster.mean.to.samples}, since by Lemma~\ref{lemma:data.initialization}(b) and Assumption~\ref{a:sigma} we have the inequalities $\snorm{\xit k{t+1} - \mu_1}\leq  C_1 \sigma \sqrt{d}\leq C_1 / C$, by taking $C>512 C_1 \exp(2)$ we have $\sip{\wt {t+1}_{j}}{\xit k{t+1}} > 0$ for $k\in \imt{+\mu_1}{t+1}$.  A symmetric argument shows that $\sip{\wt {t+1}_j}{\xit k{t+1}} < 0$ for $k\in \imt {-\mu_1}{t+1}$.  
This completes the proof that neuron alignment holds for neurons $j\in J_{+\mu_1}$.  We can show that neuron alignment holds for neurons in $J_{-\mu_1}\cup J_{\pm \mu_2}$ using an analogous argument.

\paragraph{Almost-orthogonality holds at time $t+1$.} We now show that almost-orthogonality continues to hold at time $t+1$ given it holds at time $t$.  We will prove the result for neurons $j\in J_{+\mu_1}$ with an analogous argument holding for the neurons in $J_{-\mu_1}\cup J_{\pm \mu_2}$.   

We want to show that, for any neuron $j\in J_{+\mu_1}$ satisfying
\[ |\sip{\wt t_j}{\mu_2}| \leq 3 \alpha |a_j|,\]
we have that $|\sip{\wt {t+1}_j}{\mu_2}|\leq 3 \alpha |a_j|$ as well.  We will show this by demonstrating that if at time $t$ we have $|\sip{\wt {t}_j}{\mu_2}| \geq \alpha |a_j|$, then $\sip{\wt {t+1}_j}{\mu_2}$ will either change sign or will decrease in magnitude at the next iteration; since the order of norm changes for a single neuron in one step is $O(\alpha |a_j|)$, this will complete the proof.  

Consider the case that $\sip{\wt {t}_j}{\mu_2} \geq \alpha |a_j|$;
the negative case will follow using a symmetric argument.   Since neuron alignment holds, an identical argument used to derive Equations~\eqref{eq:J1+.activation.identity.noisy} through~\eqref{eq:neuron.alignment.J+mu1} implies that 
\begin{align}\nonumber 
   &\sip{ w_{j}^{(t+1)} - w_{j}^{(t)}}{\mu_2} \\ \nonumber 
   &=  \frac{\alpha |a_j|}{n}\sum_{i\in \imct {+\mu_1}{t}} -\ell'_{i,t}\sip{x_i}{\mu_2} - \frac{\alpha |a_j|}{n}\sum_{i\in \imnt {+\mu_1}{t}} -\ell'_{i,t}\sip{x_i}{\mu_2} \\ \nonumber 
    &\qquad +\frac{\alpha |a_j|}{n}\sum_{i\in \imt {+\mu_2}{t}} -\ell'_{i,t} y_i \phi'(\langle w_j^{(t)},x_i\rangle) \sip{x_i}{\mu_2} +\frac{\alpha |a_j|}{n}\sum_{i\in \imt {- \mu_2}{t}} -\ell'_{i,t} y_i \phi'(\langle w_j^{(t)},x_i\rangle)\sip{x_i}{\mu_2}\\ \nonumber 
    &\overset{(i)}\leq 2 C_1 \alpha |a_j| \sigma \sqrt d - \f{\alpha|a_j|}n \sum_{i\in \imct{+\mu_2}{t}} -\ell'_{i,t} \phi'(\sip{\wt t_j}{x_i}) \sip{x_i}{\mu_2} + \f{\alpha|a_j|}{n} \sum_{i\in \imnt{+\mu_2}{t}} -\ell'_{i,t} \phi'(\sip{\wt t_j}{x_i}) \sip{x_i}{\mu_2} \\ \nonumber 
    &\qquad -\f{\alpha |a_j|}{n} \sum_{i\in \imct{-\mu_2}{t}} -\ell'_{i,t} \phi'(\sip{\wt t_j}{x_i}) \sip{x_i}{\mu_2} + \f{\alpha |a_j|}{n} \sum_{i\in \imnt{-\mu_2}{t}} -\ell'_{i,t} \phi'(\sip{\wt t_j}{x_i}) \sip{x_i}{\mu_2}  \\ \nonumber 
    &\overset{(ii)}\leq 2C_1\alpha |a_j| \sigma \sqrt{d} + \frac{2\alpha |a_j| |{\calnt t}| }{n} \\ \nonumber 
    &\qquad -\frac{\alpha |a_j|}{n}\sum_{i\in \imct {+\mu_2}{t}} -\ell'_{i,t} \phi'(\langle w_j^{(t)},x_i\rangle) \sip{x_i}{\mu_2} - \frac{\alpha |a_j|}{n}\sum_{i\in \imct {-\mu_2}{t}} -\ell'_{i,t} \phi'(\langle w_j^{(t)},x_i\rangle)\sip{x_i}{\mu_2}\\ \nonumber 
    &\overset{(iii)}\leq 2C_1\alpha |a_j| \sigma \sqrt{d} + \frac{2 \alpha |a_j| |{\calnt t}|}{n} \\ \nonumber 
    &\quad -\frac{\alpha |a_j|}{n}\sum_{i\in \imct {+\mu_2}{t}} \f 12 \exp(-2) \phi'(\langle w_j^{(t)},x_i\rangle)\cdot \f 12 +\f 32 \cdot \frac{\alpha |a_j|}{n}\sum_{i\in \imct {-\mu_2}{t}} \phi'(\langle w_j^{(t)},x_i\rangle) \\ \nonumber
    &= \alpha |a_j| \Bigg[  2C_1 \sigma \sqrt{d} + 2 \f{|{\calnt t}|}n \\
    &\quad - \f {\exp(-2)}{4n} \Bigg( \sum_{i\in \imct {+\mu_2}{t}} \phi'(\sip{\wt t_j}{x_i}) - 6\exp(2)\cdot  \sum_{i\in \imct {-\mu_2}{t}} \phi'(\sip{\wt t_j}{x_i}) \Bigg)  \Bigg].\label{eq:almost.orthogonality.mu2.intermediate}
\end{align}
In $(i)$ we have used Lemma~\ref{lemma:data.initialization}, so that $|\sip{x_i}{\mu_2}|\leq C_1 \sigma \sqrt d$ when $i\in \imt{+\mu_1}t$.  In inequality $(ii)$, we use that Lemma~\ref{lemma:data.initialization} implies $|\sip{x_i}{\mu_2}| \leq 2$ for $i\in \imt{\pm \mu_2}{t}$, so that by the 1-Lipschitz property of $\ell$ and $\phi$ we have,
\[ \l| \sum_{i\in \imnt {\pm \mu_2}t} -\ell'_{i,t} y_i \phi'(\sip{\wt t_j}{x_i})\sip{x_i}{\mu_2} \r| \leq 2|\calnt t|.\]
In inequality $(iii)$, we have used that $\ell$ is 1-Lipschitz as well as Lemma~\ref{lemma:data.initialization} so that $|\sip{x_i}{\mu_2}| \leq 3/2$ for $i\in \imt{-\mu_2}{t}$.

From the above, one can see that if $\sum_{i\in \imct {+\mu_2}t} \phi'(\sip{\wt t_j}{x_i}) \gg \sum_{i\in \imct {-\mu_2} t} \phi'(\sip{\wt t_j}{x_i})$, then we will have that the above quantity is negative, showing that $\sip{\wt t_j}{\mu_2}$ will decrease.  When $\sip{\wt t_j}{\mu_2}$ is large, then this is likely to occur; this is precisely the second part of Lemma~\ref{lemma:Nj+.argument.all.times}.  
In particular, since $\sip{\wt {t}_j}{\mu_2}\geq \alpha |a_j|$ by assumption, we have
\begin{align*}
    \ip{\f{\wt t_j}{\snorm{\wt t_j}}}{\mu_2} \overset{(i)}\geq \f{ \alpha |a_j|}{3 \alpha |a_j| t} \overset{(ii)} \geq \f{4}{3} \alpha \overset{(iii)}\geq \f{1}{2\sqrt{C}},\numberthis \label{eq:sip.wt.mu2.ao}
\end{align*}
where $(i)$ follows by Lemma~\ref{lemma:per.neuron.norm} and the fact that we are considering the case when $\sip{ \wt t_j}{\mu_2} \geq \alpha |a_j|$; inequality $(ii)$ uses that $t\leq 1/(4\alpha)$; and $(iii)$ uses Assumption~\ref{a:stepsize}, so that $\alpha \geq 1/(2\sqrt C)$. Since the correlation with the cluster mean is of constant order, we can repeat the argument used in~\eqref{eq:correlation.with.cluster.mean.to.samples} to show that the sign of $\sip{\wt t_j}{\xit it}$ is the same as the sign of $\sip{\wt t_j}{\mu_2}$ for $i\in \imct{+\mu_2}{t}$:
\begin{align*}
\sip{\wt t_j / \snorm{\wt t_j}}{\xit it} &\geq \sip{\wt t_j / \snorm{\wt t_j}}{\mu_2} - \snorm{\xit it - \mu_2} \\
&\overset{(i)}\geq \f 1 {2\sqrt C} - C_1 \sigma \sqrt d \\
&\overset{(ii)}>0.
\end{align*} 
Inequality $(i)$ uses the lower bound in~\eqref{eq:sip.wt.mu2.ao} as well as Lemma~\ref{lemma:data.initialization}.  Inequality $(ii)$ uses assumption~\ref{a:sigma}, so that $C_1 \sigma \sqrt d \leq C_1/C < 1/(2\sqrt C)$ for $C$ sufficiently large relative to $C_1$.  Using a symmetric argument, we thus have for positive neurons satisfying $\sip{\wt t_j}{\mu_2}\geq \alpha |a_j|$,
\begin{equation}
\text{for every $i\in \imct{+\mu_2}{t}$}, \quad \phi'(\sip{\wt t_j}{\xit it})=1,\quad \text{while for $i\in \imct{-\mu_2}{t}$},\quad \phi'(\sip{\wt t_j}{\xit it})=0.\label{eq:ao.intermediate.claim}
 \end{equation}
Substituting the above into~\eqref{eq:almost.orthogonality.mu2.intermediate}, we get,
\begin{align*}
     \sip{ w_{j}^{(t+1)} - w_{j}^{(t)}}{\mu_2} &\leq \alpha |a_j| \Bigg[  2C_1 \sigma \sqrt{d} + 2 \f{|{\calnt t}|}n \\
    &\quad - \f {\exp(-2)}{4n} \Bigg( \sum_{i\in \imct {+\mu_2}{t}} \phi'(\sip{\wt t_j}{\xit it}) - 6\exp(2)\cdot \sum_{i\in \imct {-\mu_2}{t}} \phi'(\sip{\wt t_j}{\xit it}) \Bigg)\Bigg] \\
    &\overset{(i)}\leq \alpha |a_j| \Bigg[ 2C_1 \sigma \sqrt{d} + 2 \f{|{\calnt t}|}n - \f {\exp(-2)}{4n} |\imct {+\mu_2}{t}|  \Bigg]\\
    &\overset{(ii)}\leq \alpha |a_j| \l[ 2 C_1 \sigma \sqrt{d} + 3 \f{|\calnt t|}n - \f{\exp(-2)}{4n} |\imt {+\mu_2}t| \r]\\
    &\overset{(iii)}\leq \alpha |a_j| \l[ 2 C_1 \sigma \sqrt{d} + 3 \f{|\calnt t|}n - \f{\exp(-2)}{32}  \r]\\
    &\overset{(iv)}<    0.\numberthis 
\end{align*}
The inequality $(i)$ uses eq.~\eqref{eq:ao.intermediate.claim}.  Inequality $(ii)$ uses that $|\imct {+\mu_2}t| \geq |\imt{+\mu_2}{t}| - |\imnt{+\mu_2}{t}|\geq |\imt{+\mu_2}{t}| - |\calnt t|$.   Inequality $(iii)$ uses the lower bound on the number of points in cluster $\mu_2$ given in Lemma~\ref{lemma:data.initialization} together with Assumption~\ref{a:samples}.  The final inequality follows by using Assumption~\ref{a:sigma} and Lemma~\ref{lemma:data.initialization}, which allow for us to take $\sigma \sqrt{d}$ and $|\calnt t|/n$ smaller than an absolute constant.  
This shows that, in the case that $\sip{\wt t_j}{\mu_2} \geq  \alpha |a_j|$, the value of $\sip{\wt {t+1}_j}{\mu_2}$ is strictly less than $\sip{\wt t_j}{\mu_2}$.  Since by Lemma~\ref{lemma:data.initialization} we have $\snorm{\xit it}\leq \sqrt 2$, we have,
\begin{equation} 
|\sip{\wt {t+1}_j - \wt t_j}{\mu_2}| = \alpha |a_j| \l| \f 1 n \summ i n -\ell'_{i,t} y_i \phi'(\sip{\wt t_j}{x_i})\sip{x_i}{\mu_2} \r| \leq 2 \alpha|a_j|.\label{eq:wjt+1.increment.amount}
\end{equation}
As we have shown $\sip{\wt t_j}{\mu_2} \geq \alpha |a_j|$, this implies $\sip{\wt {t+1}_j}{\mu_2} \in\l[ (1 - 2) \alpha |a_j|, \alpha |a_j|\r)$, and thus the inequality $|\sip{\wt {t+1}_j}{\mu_2}|\leq 3 \alpha |a_j|$ holds as desired.  This completes the induction in the case that $\sip{\wt t_j}{\mu_2} \geq \alpha |a_j|$.  

For the case $\sip{\wt t_j}{\mu_2} \leq - \alpha |a_j|$, we can use a nearly identical argument as above to show that $\newline\sip{\wt {t+1}_j~-~\wt t_j}{\mu_2}>0$ so that $\sip{\wt {t+1}_j}{\mu_2}\in (-\alpha |a_j|, (-1+ 2) \alpha |a_j|]$.  This again gives $|\sip{\wt {t+1}}{\mu_2}| \leq 3 \alpha |a_j|$. 

The only remaining case is when $|\sip{\wt t_j}{\mu_2}|\leq \alpha |a_j|$.  In this case,~\eqref{eq:wjt+1.increment.amount} implies that we have the inequality $|\sip{\wt {t+1}_j}{\mu_2}|\leq (1+2)\alpha |a_j| = 3 \alpha |a_j|$, completing the induction for the $J_{+\mu_1}$ neurons.  The proof that almost-orthogonality holds for neurons $j\in J_{-\mu_1}\cup J_{\pm \mu_2}$ holds using an analogous argument.

\end{proof}

\subsection{Proof of Theorem~\ref{thm:final.generalization}}\label{sec:mainthm}
For the reader's convenience, we restate the theorem below before completing its proof.
\mainthm*
\begin{proof}
First, note that with probability at least $1-4\delta$, a good run occurs, so that the results of Lemma~\ref{lemma:random.init}, Lemma~\ref{lemma:candidate.subnetwork},  Lemma~\ref{lemma:data.initialization}, and Lemma~\ref{lemma:Nj+.argument.all.times} all hold for the absolute constant $C_0=4^5 \cdot 1024^2 \exp(4)$.  
We thus can apply Lemma~\ref{lemma:nac.orthogonality.holds.all.times} so that neuron alignment and almost-orthogonality hold for times $t=1, \dots, 1/4\alpha$.  
Since neuron alignment and almost-orthogonality hold, by Lemma~\ref{lemma:subnetwork.margin.time.t1}, we have, 
 \begin{align*} 
\yit i{t} f^J(\xit i{t}; \Wt {T}) &\geq \f {\exp(-2)}{4\cdot 1024}(1-1/C_0)^2  \quad \text{ for all } i\in \calct {t},\quad \text{and}\\
\yit i{t} f^J(\xit i{t}; \Wt {T}) &\leq - \f {\exp(-2)}{4\cdot 1024} (1-1/C_0)^2 \quad \text{ for all } i\in \calnt  {t}.\label{eq:clean.subnetwork.margin} \numberthis 
\end{align*} 
In order to apply Lemma~\ref{lemma:good.subnetwork.suffices}, which relates the prediction on the subnetwork to the entire network, we need to ensure that for $C_f = 4\cdot 1024 \exp(2) / (1-1/C_0)^2$ we have $|J^c|/m \leq 1/16 C_f^2$. 
 If we denote by $J^c = [m]\setminus (J_{\pm \mu_1} \cup J_{\pm \mu_2})$, then $|J^c|/m \leq 1-(1-1/C_0)^2 \leq 2/C_0$, so that,
\begin{equation*}
\f{|J^c|}{m} \leq \f{2}{C_0}  \overset{(i)}= \f{\exp(-4)}{2 \cdot 16^2 \cdot 1024^2} \leq \f{\exp(-4)}{16^2 \cdot 1024^2} \l(1 - \f 1 {C_0}\r)^2 = \f{ 1}{16C_f^2}.
\end{equation*}
The equality $(i)$ follows since $C_0 = 4^5 \cdot 1024^2 \exp(4)$.  Thus we may apply Lemma~\ref{lemma:good.subnetwork.suffices}.  Since $\snorm{\Wt {T}}_F\leq 1$ by Lemma~\ref{lemma:subnetwork.margin.time.t1}, and since $\snorm{\xit i{t}}\leq 2$ by Lemma~\ref{lemma:data.initialization}, the lower bound for clean samples given in~\eqref{eq:clean.subnetwork.margin} can be used in Lemma~\ref{lemma:good.subnetwork.suffices} to get,
\begin{equation} \text{for all }i\in \calct {t},\quad  \yit i{t} f(\xit i{t}; \Wt {T}) \geq \f{ \exp(-2)}{16 \cdot 1024} =: \gamma >0.\label{eq:margin.clean.points}
\end{equation}
Using a symmetric argument, we have that noisy samples satisfy
\[\text{for all }i\in \calnt {t},\quad  \yit i{t} f(\xit i{t}; \Wt {T})\leq -\f{\exp(-2)}{16 \cdot 1024} = - \gamma < 0.\]
This shows that the neural network accurately classifiers all of the clean samples correctly at a margin of $\gamma>0$, and misclassifies all noisy samples incorrectly.  Since we have the Frobenius norm bound $\snorm{\Wt T}_F\leq 1$, we can therefore use a simple Rademacher complexity-based argument to derive a generalization bound for the neural network.  In particular, let us define the ramp loss
\[ r_\gamma(z) := \min(1, \max(0, 1 - z/\gamma)).\]
Then $r_\gamma$ is $\gamma^{-1}$-Lipschitz, and if we denote by
\[ \mathcal{F} := \{ x \mapsto f(x;W) : \snorm{W}_F\leq 1\}\]
as the class of two-layer ReLU networks with Frobenius norm at most 1, the expected Rademacher complexity~\citep[Lemma 26.9]{shalevschwartz} of the hypothesis class induced by the composition of $r_\gamma$ with the class of two-layer ReLU networks with Frobenius norm at most 1 satisfies
\[ \mathfrak{R}(r_\gamma \circ \mathcal{F}) \leq \gamma^{-1} \mathfrak{R}(\mathcal{F}).\]
Since $\E_{(x,y)\sim \pnoise} [\snorm{x}^2]\leq 2$, a standard bound on the Rademacher complexity of two-layer ReLU networks (see Proposition~\ref{prop:rademacher.complexity}) therefore implies
\[ \mathfrak{R}(r_\gamma \circ \mathcal{F}) \leq \f{ 4\gamma^{-1} }{\sqrt n}.\] 
Finally, note that by~\eqref{eq:margin.clean.points}, we have that the empirical risk under the ramp loss $r_\gamma$ is at most the risk under the zero-one loss,
\begin{equation}\label{eq:empirical.risk.ramp.loss}
    \f 1 n \summ i n r_\gamma(y_i f(x_i;\Wt t)) \leq \f{|\calN|}n.
\end{equation}
Standard Rademacher complexity generalization bounds
 (e.g.~\citet[Theorem 26.5]{shalevschwartz}) thus imply 
\begin{align*}
    P\big(y \neq \sgn(f(x; \Wt {T}))\big) &\leq \E[r_\gamma(y f(x;\Wt T)] \\
    &\leq \f 1 n \summ i n r_\gamma(y_i f(x_i;\Wt t)) + \f{ 4 \gamma^{-1}}{\sqrt n} + \sqrt{\f{2\log(4/\delta)}n} \\
    &\leq \f{ |\calN|}{n}  +  \f{ 4 \gamma^{-1}}{\sqrt n} + \sqrt{\f{2\log(4/\delta)}n}\\
    &\leq \eta + \f{ \sqrt{2C \log(2T/\delta)} + 4\gamma^{-1} + \sqrt{2 \log(4/\delta)}}{\sqrt n}.
\end{align*}
In the last inequality, we have used that $\snorm{\Wt{T}}_F\leq 1$ and that part (c) of Lemma~\ref{lemma:data.initialization} implies $|\calnt t|/n  \leq \eta + \sqrt{2C\log(1/\delta) / n}$. Since $T = 1/(4\alpha) +1$ and $\alpha \geq 1/(2\sqrt C)$, this completes the proof. 
\end{proof}

\section{Rademacher Complexity Bound} 
Below, we provide a characterization of the Rademacher complexity of the class of one-hidden-layer ReLU networks with weights that have a bounded Frobenius norm. 
\begin{proposition}\label{prop:rademacher.complexity}
Let $R>0$ be arbitrary, and let $\eps_i\iid \Unif(\{+1,-1\})$ be independent Rademacher random variables, and let $s\in \R^n$ denote the vector of Rademacher variables.  Consider $\hat {\mathfrak{R}}(\calF_R)$, the empirical Rademacher complexity~\citep{bartlett2003rademacher} of the function class 
\[ \calF_R := \{ x\mapsto f(x; W) : \snorm{W}_F \leq R \},\]
defined by 
\[ \hat {\mathfrak{R}}_n(\calF_R) := \E_{s \sim \Unif(\{+1,-1\})^n} \l[ \sup_{\snorm{W}_F\leq R} \f 1 n \summ i n \eps_i f(x_i;W) \r] .\]
Then, for $a_j \iid \Unif(\{1/\sqrt m, -1/\sqrt m\})$, we have
\[ \mathfrak{R}(\calF_R) := \E_{(x_i,y_i)\sim \calD^n} \hat{ \mathfrak{R} }(\calF_R) \leq \f{ 2R\sqrt{ \E_x \l[\snorm{x}^2\r]}}{\sqrt n}.\]
\end{proposition}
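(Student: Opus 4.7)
The plan is to reduce the Rademacher complexity of $\calF_R$ to the classical $O(R/\sqrt n)$ bound for the class of linear predictors with unit Euclidean norm. First, I would expand
\[
n\,\hat{\mathfrak{R}}_n(\calF_R) = \E_\epsilon \sup_{\snorm{W}_F\le R}\sum_{j=1}^m a_j\, Z_j(w_j), \quad \text{where } Z_j(w) := \sum_{i=1}^n \epsilon_i\,\phi(\sip{w}{x_i}),
\]
and apply Cauchy--Schwarz in the neuron index $j$ together with $\snorm{a}=1$ to obtain $\sum_j a_j Z_j(w_j) \le \sqrt{\sum_j Z_j(w_j)^2}$.

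The key observation is that the formula for $Z_j$ is the same function of $w$ for every $j$, and $Z_j$ is positively $1$-homogeneous in $w_j$ because $\phi$ is. Writing $r_j := \snorm{w_j}$ and $Z(v) := \sum_i \epsilon_i \phi(\sip{v}{x_i})$, this gives $Z_j(w_j)^2 \le r_j^2 \sup_{\snorm{v}\le 1} Z(v)^2$ for every $j$; summing and using $\sum_j r_j^2 \le R^2$ yields
\[
\sup_{\snorm{W}_F\le R} \sqrt{\sum_j Z_j(w_j)^2} \;=\; R\,\sup_{\snorm{w}\le 1}\l|Z(w)\r|,
\]
with equality attained by placing the entire Frobenius-norm budget on a single neuron pointing in the optimal direction. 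This collapse onto a single unit vector is the main step of the argument and is what keeps the final bound independent of the width $m$; verifying it cleanly is where the only real care is needed.

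From here the remainder is textbook. Symmetry of $\epsilon$ under negation, combined with $|x|=\max(x,-x)$ and subadditivity of $\sup$, costs a factor of $2$ in passing from $\l|Z(w)\r|$ to $Z(w)$; Ledoux--Talagrand contraction (ReLU is $1$-Lipschitz with $\phi(0)=0$) then peels off $\phi$, reducing matters to the linear Rademacher complexity, which equals $\E_\epsilon\snorm{\sum_i \epsilon_i x_i}$ and is at most $\sqrt{\sum_i \snorm{x_i}^2}$ by Jensen. Dividing by $n$ and a final Jensen step $\E_x\sqrt{\sum_i\snorm{x_i}^2}\le \sqrt{n\,\E\snorm{x}^2}$ give the claimed bound $2R\sqrt{\E\snorm{x}^2}/\sqrt n$.
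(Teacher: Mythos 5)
Your proposal is correct and follows essentially the same route as the paper's proof: both collapse the width-$m$ network to the single unit-norm neuron class $\{x\mapsto \phi(\sip{\bar w}{x}) : \snorm{\bar w}\le 1\}$ using positive homogeneity of the ReLU together with $|a_j|=1/\sqrt m$ and the Frobenius-norm budget (the paper via $\sum_j |a_j|\snorm{w_j}\le \snorm{W}_F$ and a $\max_j$ bound, you via Cauchy--Schwarz in $j$ plus homogeneity, which is only a cosmetic difference), and then both finish identically with the factor-$2$ symmetrization to drop the absolute value (valid since $v=0$ gives $Z(0)=0$), Lipschitz contraction, the linear-class bound $\E_\eps\snorm{\sum_i \eps_i x_i}\le\sqrt{\sum_i\snorm{x_i}^2}$, and Jensen over the data.
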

\begin{proof}
We mimic the proof given in~\cite[Lecture 8]{tengyuleturenotes}.   We have
\begin{align*}
    \hat{\mathfrak{R}}(\calF_R) &= \f 1 n \E_{\eps_i} \l[ \sup_{\snorm{W}_F\leq R} \summ i n \eps_i f(x_i;W) \r] \\
    &= \f 1 n \E_{\eps_i} \l[ \sup_{\snorm{W}_F\leq R} \summ i n \eps_i \summ j n a_j \phi(\sip{w_j}{x_i}) \r]\\
    &\overset{(i)}= \f 1 n \E_{\eps_i} \l[ \sup_{\snorm{W}_F\leq R} \summ j m  a_j \snorm{w_j}_2 \summ i n \eps_i \phi(\sip{w_j/\snorm{w_j}_2}{x_i}) \r]\\
    &\leq\f 1 n \E_{\eps_i} \l[ \l( \sup_{\snorm{W}_F\leq R} \summ j m  |a_j| \snorm{w_j}_2 \r) \max_{j\in [m]} \l| \summ i n \eps_i \phi(\sip{w_j/\snorm{w_j}_2}{x_i}) \r| \r]\\
    &\overset{(ii)}\leq\f R n \E_{\eps_i} \l[ \max_{j\in [m]} \l| \summ i n \eps_i \phi(\sip{w_j/\snorm{w_j}_2}{x_i}) \r| \r]\\
    &\leq \f R n \E_{\eps_i} \l[ \sup_{\norm{\bar w}\leq 1} \l| \summ i n \eps_i \phi(\sip{\bar w}{x_i}) \r| \r].\numberthis \label{eq:rademacher.intermediate.bound}
\end{align*}
In $(i)$ we use the homogeneity of the ReLU activation, and in $(ii)$ we use the Cauchy--Schwarz inequality to get that
\[ \summ j m |a_j| \snorm{w_j}_2 = \f 1{ \sqrt m} \summ j m \snorm{w_j}_2 \leq \f 1 {\sqrt m} \cdot \sqrt{m} \sqrt{\summ j m \snorm{w_j}^2} = \snorm{W}_F.\]
From~\eqref{eq:rademacher.intermediate.bound}, since $\phi$ is 1-Lipschitz and the zero function is included in the class $\{x\mapsto \phi(\sip{\bar w}{x}) : \snorm{\bar w}\leq 1\}$, a symmetrization argument yields~\cite[Lecture 5]{tengyuleturenotes}
\begin{align*}
    \hat{\mathfrak{R}}(\calF_R) \leq \f {2R}n  \E_{\eps_i} \l[ \sup_{\norm{\bar w}\leq 1} \summ i n \eps_i \phi(\sip{\bar w}{x_i}) \r] = 2 R  \cdot \hat {\mathfrak{R}} \l(\{x\mapsto \phi(\sip{\bar w}{x}) : \snorm{\bar w}\leq 1\}\r).
\end{align*}
Finally, as $\phi$ is 1-Lipschitz, the contraction property of the Rademacher complexity and standard Rademacher complexity bounds for linear hypothesis classes~\cite[Lemma 26.10]{shalevschwartz} yields the desired bound. 
\end{proof}

\section{Proof of Proposition~\ref{prop:feature.maps.different}}\label{sec:featuremapschange}
We restate and prove Proposition~\ref{prop:feature.maps.different} below.
\featuremapschange*
\begin{proof}
For simplicity, let us denote $x_i$ by the short-hand $x$.  Since the $j$-th component $(j\in [m])$ of $\phi(W x)$  is given by $\phi(\sip{w_j}{x})$, we have,
\begin{align*}
    \snorm{\phi(\Wt T x) - \phi(\Wt 0 x)} ^2 &= \summ j m [\phi(\sip{\wt T_j}{x}) - \phi(\sip{\wt 0_j}{x})]^2.
\end{align*}
To show that the feature map moves significantly, it therefore suffices to derive a lower bound on $|\phi(\sip {\wt T_j}{x}) - \phi(\sip{\wt 0_j}{x})|$ for each $j$.  To do so, we will show that for each sample $x$, a significant number of neurons have large, positive activations, so that $\sip{\wt T_j}{x} \gg 0$, while the near-zero initialization allows for us to essentially ignore the $\phi(\sip{\wt 0_j}{x})$ term. 

Since neuron alignment holds at times $t=1, \dots, T-1$, an identical argument to that of~\eqref{eq:margin.time.t.node.j} shows that for any $\mu \in \{\pm \mu_1, \pm \mu_2\}$ and $j\in J_{\mu}$, we have,
\[ \sip{\wt {T}_j - \wt 1_j}{\mu} \geq \f{ \alpha |a_j| (T-1)}{64} \exp(-2) = \f{ |a_j| \exp(-2)}{256}.\]
Moreover, using Equation~\eqref{eq:unnormalized.margin} we also have that $\sip{\wt 1_j - \wt 0_j}{\mu} >0$.  Adding this inequality to the preceding display, we get,
\begin{align*} 
\text{for each $j\in J_{+\mu_1}$,  }\quad &\sip{\wt {T}_j - \wt 0_j}{+\mu_1} \geq \f{ |a_j| \exp(-2)}{256},\\
\text{for each $j\in J_{-\mu_1}$,  }\quad &\sip{\wt {T}_j - \wt 0_j}{-\mu_1} \geq \f{ |a_j| \exp(-2)}{256},\\
\text{for each $j\in J_{+\mu_2}$,  }\quad &\sip{\wt {T}_j - \wt 0_j}{+\mu_2} \geq \f{ |a_j| \exp(-2)}{256},\\
\text{for each $j\in J_{-\mu_2}$,  }\quad &\sip{\wt {T}_j - \wt 0_j}{-\mu_2} \geq \f{ |a_j| \exp(-2)}{256}.\numberthis \label{eq:wtj-w0j.ip.lb}
\end{align*} 
Following an identical calculation used in the proof of Lemma~\ref{lemma:per.neuron.norm} (see Eq.~\eqref{eq:wtj-w0j.norm}), we know that $\snorm{\wt T_j - \wt 0_j} \leq \sqrt 2 |a_j| \alpha T=\sqrt 2 |a_j|(\alpha + 4)$. Since $\alpha\leq 1/10$ we thus have, 
\begin{equation}\label{eq:wtj-wt0j.norm.ub.lb}
    \text{for every $\mu \in \{\pm \mu_1, \pm \mu_2\}$ and each $j\in J_{\mu}$,}\quad \snorm{\wt T_j - \wt 0_j} \leq \f{8}{\sqrt m}.
\end{equation}
Let $\mu(x)\in \{\pm \mu_1, \pm \mu_2\}$ be such that $x\in I_{\mu(x)}$.   Then by Lemma~\ref{lemma:data.initialization}, we know that $\snorm{x - \mu(x)}\leq C_1 \sigma \sqrt d$, so that for any $j\in J_{\mu(x)}$,
\begin{align*}
    \sip{\wt T_j - \wt 0_j}{x} &=  \ip{\wt T_j - \wt 0_j}{\mu} + \ip{\wt T_j - \wt 0_j}{x - \mu} \\
    &\overset{(i)}\geq  \f{ \exp(-2)}{256 \sqrt m} -  C_1 \sigma \sqrt d \snorm{\wt T_j - \wt 0_j} \\
    &\overset{(ii)}\geq  \f{ \exp(-2)}{256 \sqrt m} -  \f{ 8C_1 \sigma \sqrt d }{\sqrt m} \\
    &\overset{(iii)}\geq \f{ \exp(-2)}{512 \sqrt m}.\numberthis \label{eq:wtj-w0j.x.ip.lb}
\end{align*}
In inequality $(i)$ we use~\eqref{eq:wtj-w0j.ip.lb} and $\snorm{x - \mu(x)}\leq C_1 \sigma \sqrt d$.  In inequality $(ii)$ we use~\eqref{eq:wtj-wt0j.norm.ub.lb}, and in inequality $(iii)$ we use Assumption~\ref{a:sigma} so that for $C>1$ sufficiently large, we have $8 C_1 \sigma \sqrt d \leq \exp(-2)/512$.  Since $\phi(z_1) - \phi(z_2) = z_1 - z_2$ when both $z_1>0$ and $z_2>0$, we thus have
\begin{align*}
    &\text{for all $i\in [n]$ and all $j\in J_{\mu(x_i)}$ satisfying $\sip{\wt 0_j}{x_i}>0$}, \\
    &\text{we have}\quad \phi(\sip{\wt T_j}{x_i}) - \phi(\sip{\wt 0_j}{x_i})\geq \f{\exp(-2)}{1024 \sqrt m}.\numberthis \label{eq:phi.difference.positiveinit}
\end{align*}

Now, note that by Lemma~\ref{lemma:data.initialization}, $\snorm{x} \leq 2$, and by Lemma~\ref{lemma:data.initialization}, we have $\snorm{\wt 0_j} \leq 2 \sinit \sqrt d$.  Continuing from~\eqref{eq:wtj-w0j.x.ip.lb}, we therefore have for any $j\in J_{\mu(x)}$,
\begin{align*}
    \sip{\wt T_j}{x} &\geq \f{\exp(-2)}{512 \sqrt m} - \sip{\wt 0_j}{x} \\
    &\geq \f{\exp(-2)}{512 \sqrt m} - 4 \sinit \sqrt{d} \\
    &\overset{(i)}\geq \f{\exp(-2)}{1024 \sqrt m},
\end{align*}
where inequality $(i)$ uses Assumption~\ref{a:sinit} so that for $C>1$ sufficiently large, we have $\sinit \leq \exp(-2)/(4096 \sqrt{md})$.  Since $\phi(\sip{\wt 0_j}{x})=0$ for $\sip{\wt 0_j}{x} <0$, this implies that
\begin{align*}
    &\text{for all $i\in [n]$ and all $j\in J_{\mu(x_i)}$ satisfying $\sip{\wt 0_j}{x_i}\leq 0$}, \\
    &\text{we have}\quad \phi(\sip{\wt T_j}{x_i}) - \phi(\sip{\wt T_j}{x_i})\geq \f{\exp(-2)}{1024 \sqrt m}.\numberthis \label{eq:phi.difference.zeroinit}
\end{align*}
Putting together~\eqref{eq:phi.difference.positiveinit} and~\eqref{eq:phi.difference.zeroinit}, we see that,
\begin{align*}
    \snorm{\phi(\Wt T x_i) - \phi(\Wt 0 x_i)} ^2 &=  \summ j m |\phi(\sip{\wt T_j}{x_i}) - \phi(\sip{\wt 0_j}{x_i})|^2 \\
    &\geq |J_{\mu(x_i)}| \cdot \l( \f{ \exp(-2)}{1024 \sqrt m}\r)^2 \\
    &\overset{(i)}\geq \f{ \exp(-4)}{1024^2} \cdot \f 1 4 \l( 1 - \f{1}{C_0}\r)^2 \\
    &\geq \f{ \exp(-4)}{8\cdot 1024^2}, \numberthis\label{eq:feature.change.lb}
\end{align*}
where inequality $(i)$ uses the lower bound on $|J_{\mu}|$ given in Lemma~\ref{lemma:candidate.subnetwork}.  

On the other hand, we have
\begin{align*}
    \snorm{\phi(\Wt 0 x_i)}  &\overset{(i)}\leq \snorm{\Wt 0}_F \snorm{x_i} \\
    &\overset{(ii)}\leq \f 3 2 \sinit \sqrt{md} \cdot 2,
\end{align*}
where $(i)$ uses that $\phi$ is 1-Lipschitz and $(ii)$ uses Lemma~\ref{lemma:data.initialization} and Lemma~\ref{lemma:per.neuron.norm}.  
Putting this upper bound together with~\eqref{eq:feature.change.lb}, we get,
\begin{align*}
    \f{ \snorm{\phi(\Wt T x_i) - \phi(\Wt 0 x_i)} }{\snorm{\phi(\Wt 0 x_i)}  } &\geq \f{ \exp(-2)}{16\cdot 1024  \sinit \sqrt{md} },
\end{align*}
completing the proof.  
\end{proof}

\section{On the Optimal Error in the Noiseless Setting}\label{app:optimal.error}
In this section we show that in the noiseless setting $(\eta=0$), under assumptions~\ref{a:dimension} through~\ref{a:samples}, the optimal error achievable in $O(\sqrt{\log(1/\delta) / n})$ and that this test error is achieved by the classifier $x\mapsto \sgn(|\sip{\mu_1}{x}| - |\sip{\mu_2}{x}|)$.  

Denote $\nu(x) := |\sip{\mu_1}{x}| - |\sip{\mu_2}{x}|$.  By definition, the test error for the classifier induced by $\nu$ is
\begin{align*}
    \P(\sgn(\nu(x)) \neq y) &= \P(y \nu(x) < 0) \\
    &= \f 1 4 \Bigg(  \P_{z\sim \pclust} (\nu(z + \mu_1) < 0) + \P_{z\sim \pclust} (\nu(z-\mu_1) < 0) \\
    &\quad + \P_{z\sim \pclust} (-\nu(z + \mu_2) < 0) + \P_{z\sim \pclust} (-\nu(z-\mu_2) < 0) \Bigg). 
\end{align*}
We shall show that $\P_{z\sim \pclust} (\nu(z + \mu_1) < 0) = o_n(1)$, and an identical argument will yield the same bound for the remaining three terms.  By definition,
\begin{align*}
    \P_{z\sim \pclust} (\nu(z + \mu_1) < 0) &= \P_z( |\sip{\mu_1}{z+\mu_1}| - |\sip{\mu_2}{z+\mu_1}|) \\
    &= \P_z( |1 + \sip{\mu_1}{z}| - |\sip{\mu_2}{z}| < 0) \\
    &\leq \P_z ( |\sip{\mu_1}{z}| + |\sip{\mu_2}{z}| > 1) \\
    &\leq \P_z(|\sip{\mu_1}{z}| > 1/3) + \P_z(|\sip{\mu_2}{z}| > 1/3).\numberthis \label{eq:nu.mu1.ub}
\end{align*}
For $i\in \{1,2\}$, since $z\sim \pclust$ is log-concave with $\E[z]=0$, $\E[zz^\top]=\sigma^2 I$ and $\snorm{\mu_i}=1$, $\sip{\mu_i}{z/\sigma}$ is isotropic and log-concave and hence $\P_{z\sim \pclust}( |\sip{\mu_i}{z}| > 1/3) \leq 3\exp(-\sigma^{-1}/3)$  using \citet[Theorem 5.1 and Lemma 5.7]{lovasz}.  By assumption~\ref{a:sigma} this means $\P_{z\sim \pclust}( |\sip{\mu_i}{z}| > 1/3) \leq 3\exp(-C \sqrt d/3)$.  We claim that this quantity is at most $O(\sqrt{\log(1/\delta)/n})$ under assumption~\ref{a:dimension}.   To see this, note that $\exp(-C \sqrt d/3) \leq \sqrt{\log(1/\delta)/n}$ if and only if $C \sqrt d > \f 32 \log(\nicefrac n{\log(1/\delta)})$.  We have,
\begin{align*}
    \log^2 \l( \f{n}{\log(1/\delta)}\r) = \l( \log n - \log \f 1 \delta \r)^2
    &\leq \l( \log n + \f 1 \delta \r)^2 = \log^2(n/\delta).
\end{align*}
In particular, since by assumption~\ref{a:dimension} we have $d\geq C \log^2(n/\delta)$, we also have $d \geq C \log^2(n/\log(1/\delta))$.  In particular, $\sqrt d > \sqrt{C} \log(n/\log(1/\delta)) > \f 3 2 \log(n/\log(1/\delta))$ for $C \geq 2$. This shows that $\exp(-C \sqrt d/3) \leq \sqrt{\log(1/\delta)/n}$ and hence $\P_z(|\sip{\mu_i}{z}| > 1/3) = O(\sqrt{\log(1/\delta)/n})$ for each $i$.  Substituting this into~\eqref{eq:nu.mu1.ub} shows that 
\[ \P_{z\sim \pclust} (\nu(z + \mu_1) < 0) = O(\sqrt{\log(1/\delta)/n}).\] 

\section{Experiment details} \label{appendix:experiment}
We provide here the experimental details for Figure~\ref{fig:highd.vs.lowd}. We consider a two-layer ReLU network of the form~\eqref{eq:twolayerrelu} with $m=400$ neurons.  The within-cluster distribution is Gaussian, $\pclust \sim \mathsf N(0, \sigma^2 I_d)$, where the within-cluster variance is given by $\sigma^2 = 1/d^{1.2}$ and we flip 15\% of the labels within each cluster the orthogonal cluster's label.  We initialize using centered Gaussians with variance $\sinit^2 = \nicefrac{0.01}{md}$ and run with a step-size of $\alpha=0.1$.  Validation accuracy is measured using $n=6000$ samples.  

\printbibliography
\end{document}